\journalname{Journal of Mathematical Imaging and Vision}
\newtheorem{result}{Result}
\providecommand{\tabularnewline}{\\}
\providecommand{\algorithmname}{Algorithm}
\begin{document}
\global\long\def\mtype#1{\mathtt{#1}}
\global\long\def\mP{\mtype P}
\global\long\def\mM{\mtype M}
\global\long\def\mR{\mtype R}
\global\long\def\mA{\mtype A}
\global\long\def\mB{\mtype B}
\global\long\def\mr{\mtype r}
\global\long\def\mL{\mtype L}
\global\long\def\mH{\mtype H}
\global\long\def\mQ{\mtype Q}
\global\long\def\Kint{\mtype K} 
\global\long\def\Rot{\mtype R} 
\global\long\def\Real{\mathbb{R}}
\global\long\def\Complex{\mathbb{C}}
\global\long\def\vP{\mathbf{P}}
\global\long\def\bP{\mathbb{P}}
\global\long\def\bQ{\mathbf{Q}}
\global\long\def\bp{\mathbf{p}}
\global\long\def\bq{\mathbf{q}}
\global\long\def\bw{\mathbf{w}}
\global\long\def\bO{\mathbf{O}}
\global\long\def\bo{\mathbf{o}}
\global\long\def\bx{\mathbf{x}}
\global\long\def\bX{\mathbf{X}}
\global\long\def\bI{\mathbf{I}}
\global\long\def\bJ{\mathbf{J}}
\global\long\def\mI{\mtype I} 
\global\long\def\mJ{\mtype J}
\global\long\def\bC{\mathbf{C}} 
\global\long\def\bpi{\boldsymbol{\pi}} 
\global\long\def\bxi{\boldsymbol{\xi}} 
\global\long\def\bchi{\boldsymbol{\chi}} 
\global\long\def\balpha{\boldsymbol{\alpha}}
\global\long\def\bbeta{\boldsymbol{\beta}}
\global\long\def\bomega{\boldsymbol{\omega}}
\global\long\def\br{\mathbf{r}}
\global\long\def\mF{\mtype F}
\global\long\def\bl{\boldsymbol{\ell}}
\global\long\def\be{\mathbf{e}}
\global\long\def\ba{\mathbf{a}}
\global\long\def\bx{\mathbf{x}}
\global\long\def\by{\mathbf{y}}
\global\long\def\bc{\mathbf{c}}
\global\long\def\mC{\mtype C}
\global\long\def\bR{\mathbf{R}}
\global\long\def\mD{\mtype D}
\global\long\def\mK{\mtype K}
\global\long\def\tH{\widetilde{\mH}}
{} \global\long\def\bu{\mathbf{u}}
\global\long\def\bv{\mathbf{v}}

\global\long\def\pinf{\pi_{\infty}}
\global\long\def\bpinf{\boldsymbol{\pi}_{\infty}}
\global\long\def\xinf{\mathbf{x_{\infty}}}
\global\long\def\linf{\mathbf{l}_{\infty}}
\global\long\def\Oinf{\mathtt{\Omega}_{\infty}}
\global\long\def\Qinf{\mathtt{Q}_{\infty}^{\ast}}
\global\long\def\mQa{\mathtt{Q}^{\ast}}
\global\long\def\iac{\omega}
\global\long\def\diac{\omega^{\ast}}
\global\long\def\Sinf{\Sigma_{\infty}}

\global\long\def\ovec{\operatorname{vec}}
\global\long\def\rank{\operatorname{rank}}

\global\long\def\rev#1{{\color{blue}{#1}}}

\title{Autocalibration with the Minimum Number of Cameras\\ with Known
Pixel Shape}



\author{Jos\'e I. Ronda \and Antonio Vald\'es \and Guillermo Gallego
\thanks{This work has been partially supported by the Ministerio de Econom\'ia
y Competitividad of the Spanish Government under project TEC2010-20412
(Enhanced 3DTV). G. Gallego is supported by the Marie Curie-COFUND
Programme of the EU, Seventh Framework Programme.
}}


\institute{J.I. Ronda, G. Gallego \at
              Grupo de Tratamiento de Im\'agenes, Universidad Polit\'ecnica de Madrid, 28040 Madrid, Spain\\
              \email{\{jir,ggb\}@gti.ssr.upm.es}
           \and
           A. Vald\'es \at
	          Dep. de Geometr\'ia y Topolog\'ia, Universidad Complutense de Madrid, 28040 Madrid, Spain\\
              \email{Antonio\_Valdes@mat.ucm.es}
}

\date{Received: 12 December 2012 / Accepted: 2 January 2014}

\maketitle

\begin{abstract}
  In 3D reconstruction, the recovery of the calibration parameters of
  the cameras is paramount since it provides metric information about
  the observed scene, e.g., measures of angles and ratios of
  distances.  Autocalibration enables the estimation of the camera
  parameters without using a calibration device, but by enforcing
  simple constraints on the camera parameters.  In the absence of
  information about the internal camera parameters such as the focal
  length and the principal point, the knowledge of the camera pixel
  shape is usually the only available constraint.  Given a projective
  reconstruction of a rigid scene, we address the problem of the
  autocalibration of a minimal set of cameras with known pixel shape
  and otherwise arbitrarily varying intrinsic and extrinsic
  parameters.  We propose an algorithm that only requires 5 cameras
  (the theoretical minimum), thus halving the number of cameras
  required by previous algorithms based on the same constraint. To
  this purpose, we introduce as our basic geometric tool the six-line
  conic variety (SLCV), consisting in the set of planes intersecting
  six given lines of 3D space in points of a conic.  We show that the
  set of solutions of the Euclidean upgrading problem for three
  cameras with known pixel shape can be parameterized in a
  computationally efficient way. This parameterization
    is then used to solve autocalibration from five or more cameras,
    reducing the three-dimensional search space to a two-dimensional one.
  We provide experiments with real images showing the good performance
  of the technique.  \keywords{Camera autocalibration \and Varying
    parameters \and Square pixels \and Three-dimensional
    reconstruction \and Absolute Conic \and Six Line Conic Variety}
\end{abstract}

\section{Introduction}
\label{intro}


Three-dimensional reconstructions from images are often obtained with
calibrated cameras, i.e., cameras whose parameters have been
previously computed using calibration objects in a controlled
environment~\cite[p.~201]{YiMaLibro}.  Unfortunately, in many cases
such conditions are not available, e.g., when the images have been
acquired with non-specialized equipment or taken with a different
initial purpose. To obtain 3D reconstructions without knowledge
  of the scene content and with partial knowledge of the
  camera parameters, camera autocalibration algorithms are needed.


Camera autocalibration comprises a family of techniques that
  apply to different scenarios depending on the \emph{a priori}
  information (i.e., constraints) of the internal camera parameters. In the first works on
  autocalibration~\cite{Maybank-Faugeras,Faugeras92TE}, the assumption
  was that the internal camera parameters were unknown but constant.
  Later, other techniques were developed to deal with different
  assumptions on the focal length, the principal point, the skew or the
  aspect ratio (see~\cite[
Ch. 19]{Hartley-Zisserman}).
  The more \emph{a priori} information one can incorporate in an
  autocalibration method, the better the results are expected to be.
  Therefore, there is no decisive solution to autocalibration in all
  situations.

A common framework for autocalibration was provided by the
concept of geometric
stratification~\cite{Luong1996,Faugeras95stratificationof}.  This
technique splits the camera calibration and scene reconstruction into
three steps: first, the recovery of a projective reconstruction, i.e.,
a 3D scene and a set of cameras differing from the actual ones in a
spatial homography.  Second, the obtainment of an affine
  reconstruction (differing from the actual 3D scene in an affine
  transformation) by finding the location of the plane at
infinity~\cite{Pollefeys}.  Finally, the upgrading to a Euclidean
  reconstruction (differing from the actual scene in a similarity) by
localizing the absolute conic at infinity or any equivalent geometric
object.  General references for the subject
are~\cite{Hartley-Zisserman,Faugeras-Luong}, where an extensive
bibliography can be found.  A review of camera self-calibration 
techniques is also presented in~\cite{SurveySelfCalib2003}.

~In order to upgrade a projective reconstruction to an affine or a
Euclidean one in the absence of knowledge about the scene, some data
about the camera parameters must be
available.  For example, in the originally addressed autocalibration
problem, this additional piece of data was the constancy of the camera
intrinsic parameters~\cite{Maybank-Faugeras,Faugeras92TE}, resulting,
for each camera pair, in a couple of
polynomial 
equations in the camera parameters known as Kruppa equations.
The instability problems of these equations have been studied
in~\cite{Sturm00}.
Another constraint that has been studied is that in which the principal
point is assumed to be known. 
Then the dual absolute quadric~\cite{Triggs97}, a geometric object 
which encapsulates the information of both the plane at infinity 
(needed for affine upgrading) and the absolute conic (required for Euclidean upgrading), 
can be found by solving a set of homogeneous linear equations~\cite{Seo2000}.


In this paper we address the problem of autocalibration in its less
restrictive setting in practice: cameras with arbitrarily varying
parameters with the exception of the pixel shape, which is assumed to
be known.
This is an important scenario since the pixel shape is
  unaffected by changes in focus and zoom.  It can be easily seen
(see Sect.~\ref{sec:Background}) that this constraint is equivalent
to having cameras with square pixels. The possibility of employing
this constraint was studied in~\cite{HeydenAstrom}, where an algorithm
based on bundle adjustment~\cite{Triggs99Bundle} was
considered. Algorithms based on this restriction have also been
proposed~\cite{Ponce2,Valdes04,Valdes06} that result in a set of
linear equations, but with the drawback of requiring 10 or more
cameras.  These algorithms are inspired by the geometric observation
that, from the optical center of each square-pixel camera, two lines
can be identified in the projective reconstruction that must intersect
the absolute conic. The absolute quadratic complex (AQC) encodes the
set of lines intersecting this conic by means of a quadric in a higher
dimensional space ($\bP^5$),
which is the natural space containing Pl\"ucker coordinates of lines.
The AQC, being represented by a homogeneous symmetric $6\times 6$
matrix satisfying a linear constraint, depends linearly on $21-1-1=19$
non-homogeneous parameters, which explains the need for such a large
number of 10 views.

However, an informal parameter count reveals that far fewer cameras
are theoretically sufficient. In fact, our main unknown is a space
homography, which depends on 15 parameters. Being our target
reconstruction defined up to a similarity, which depends on 7
parameters, there are $15-7=8$ unknowns left to be found, which
correspond to the degrees of freedom (dof) needed to determine the
plane at infinity (3 dof) and the absolute conic within it (5
dof). Knowing camera skew and aspect ratio amounts to two equations
per camera and thus at least 4 cameras should be given in order to
solve the problem. Given the non-linear nature of these equations,
multiple solutions can be expected and so 5 cameras should be the
minimum required to obtain, generically, a unique solution
(see~\cite[Table 19.3]{Hartley-Zisserman},~\cite{Pollefeys99b}).


The main contribution of this paper is a technique to obtain a
Euclidean reconstruction for an arbitrary number of cameras equal or
above the theoretical minimum, which is~5, using exclusively the pixel
shape restriction. The basic geometric idea of this
  paper consists in the characterization of the candidate planes at
  infinity as those that intersect the isotropic lines of the cameras
  in points of a conic. In fact, this geometric approach was already
  pointed out in the late 19th
  century~\cite{Finsterwalder,Sturm11}.
  Finsterwalder
  showed that from multiple images of a rigid scene, projective
  reconstruction is possible. He showed that for unknown focal lengths
  and principal points, one may back-project all image cyclic points
  to 3D and find the plane at infinity as the one which cuts all those
  3D lines in points which lie on a single conic. He also explains
  that 4 cameras should be the minimal case but that multiple
  solutions will exist. However, no algorithm was provided.
In this paper, we provide such an algorithm. The
geometric object that will be employed for this
purpose is the variety of conics intersecting six given spatial lines
simultaneously, which will be termed the six-lines conic variety
(SLCV). The SLCV as a geometrical entity has been studied
in~\cite{schroecker}, although our treatment is independent and
self-contained. In this paper we are interested in the SLCV given by
the absolute conic at infinity. 

\begin{table*}
\caption{\label{tab:comparison_AQC_SLCV}Comparison of the Absolute Quadratic Complex (AQC) and the Six-Line
Conic Variety (SLCV) approaches.}
\begin{centering}
\begin{tabular}{lll}
\toprule
\multicolumn{1}{l}{} & \textbf{AQC} & \textbf{SLCV}\\
\midrule
\emph{Common features} & \multicolumn{2}{l}{}\\[0.3ex]
Basic geometric object & \multicolumn{2}{l}{Isotropic lines through the optical centers of the cameras.}\\
 
Intrinsic parameter assumption & \multicolumn{2}{l}{Known pixel shape (skew and aspect ratio).}\\
\cmidrule{1-3}
\emph{Differences} & & \\[0.3ex]
Type of algorithm & Linear (solution of a  & Non-linear (bidimensional search \\
 &  homogeneous system). &  using second degree equations).\\[0.5ex]
Required number of cameras & $\geq10$ & $\geq5$\\[0.3ex]
Optimal w.r.t. number of cameras & No & Yes\\[0.3ex]
Geometric object used & Quadric of $\bP^{5}$ & Algebraic surface of $\bP^{3\ast}$ (set of planes of 3-space)\\[0.3ex]
Degree of the geometric object & 2 & 5\\[0.3ex]
Geometric meaning & Lines intersecting the absolute conic. & Planes with conics intersecting all the isotropic lines.\\[0.3ex]
Integration with scene knowledge & Pairs of orthogonal lines. & Parallel lines. Points at infinity.\\
\bottomrule
\end{tabular}
\par\end{centering}
\end{table*}

The SLCV for six lines in generic position can be identified with a
surface of $\bP^{3*}$ (i.e., the projective space given by the planes
of 3D space) of degree 8. We prove that this degree reduces to 5 in the
case of the three pairs of isotropic lines of three finite square-pixel
cameras. We show that the fifth-degree SLCV~ has three singularities of
multiplicity three, given by the three principal planes of the
cameras. This result is used in Algorithm~\ref{alg:Algorithm1Parametrization} 
to generate a bidimensional parameterization of the candidate planes at
infinity compatible with three square-pixel cameras.  This
parameterization, together with the additional data given by another
two or more square-pixel cameras permits to identify the true plane at
infinity through a two-dimensional optimization process, leading to
Algorithm~\ref{alg:Algorithm2Upgrading}. 
However, the technique could as well use other additional data such as some scene constraints.
Table~\ref{tab:comparison_AQC_SLCV} summarizes the similarities and differences 
between the SLCV and the AQC and the algorithms built upon them.

Experiments with real images for the autocalibration of scenes with 5
or more cameras with square pixels and otherwise varying parameters
are provided, showing the good performance of the proposed technique
compared to other autocalibration methods. In the absence of knowledge
about the principal point of the cameras, the SLCV algorithm turns out
to be a feasible
  approach to solve the autocalibration problem, not requiring
  a previous initialization with an approximate solution, in the minimal case
of 5 cameras up to the case of 9 cameras.  For 10 or more cameras, the results are
similar to those of the AQC algorithm.

The paper is organized as follows: The basic background for the
problem is briefly recalled in Sect.~\ref{sec:Background}.
Section~\ref{sec:SLCV} presents the SLCV along with the basic
algebraic geometry tools required for its definition and analysis as
well as our main theoretical results. The algorithms motivated by
these results are presented in Sect.~\ref{sec:Algorithms} and the
corresponding experiments are shown in Sect~\ref{sec:Experiments}.
A comparison with other search-based autocalibration 
algorithms is discussed in Sect.~\ref{sec:discussion}. 
Conclusions of the paper are found in Sect.~\ref{sec:conclusions}.
An advance of some of the results of this paper appeared in the
conference paper~\cite{Carballeira}.

\section{Camera model and preliminary problem analysis}
\label{sec:Background}

We suppose that the cameras can be described using the pinhole camera
model~\cite{Hartley-Zisserman}, which is defined by the optical
center $\bC$ and the projection plane endowed with an affine coordinate
system given by the pixel structure of the camera.

The equations of the projection are linear when expressed in
  homogeneous coordinates, which are defined as follows. Any non-zero
  vector $(U,V,W)^\top$ proportional to $(u,v,1)^\top$ are the homogeneous
  coordinates of the point with usual coordinates
  $(u,v)^\top$.
  The set of all 3-vectors, considering them equal if they are
  proportional, is the projective space $\bP^2$.
  If we suppose the coordinates are Euclidean, the elements of $\bP^2$
  with non-zero last coordinate constitute homogeneous coordinates of
  points of the plane, whereas those elements with vanishing last
  coordinate are identified with points at infinity in a Euclidean
  reference.  The definition extends in a straightforward manner to
  spaces of arbitrary dimensions.

The equations of the projection are of the form $\bx\sim\mP\bX$ where~
the~ symbol $\sim$ represents~ vector~ proportionality,
$\mP=\mK(\mR\,|\,-\mR\widetilde{\bC})$ is the projection matrix of the camera, 
$\bX=(X,Y,Z,T)^{\top}$ are
the homogeneous Euclidean coordinates of a 3D point, $\bx=(x,y,z)^{\top}$
are the homogeneous coordinates of its projection, $\mR$ is a rotation
matrix, $\widetilde{\bC}$ are the usual Euclidean coordinates
of the optical center, and $\mK$ is the intrinsic parameter matrix,
given by 
\[
\mK=\begin{pmatrix}fm_{x} & -fm_{x}\cot\theta & u_{0}\\
0 & fm_{y}/\sin\theta & v_{0}\\
0 & 0 & 1
\end{pmatrix},
\]
 where $f$ is the focal length, $m_{x}$ and $m_{y}$ are the number
of pixels per distance unit in image coordinates in the $x$ and $y$
directions, $\theta$ is the skew angle and $(u_{0},v_{0})$ is the
principal point.

If the camera aspect ratio $\tau=m_{y}/m_{x}$ and the skew angle
$\theta$ are known, the affine coordinate transformation 
given in homogeneous coordinates by matrix
\[
\mA=\begin{pmatrix}\tau & \cos\theta & 0\\
0 & \sin\theta & 0\\
0 & 0 & 1
\end{pmatrix}
\]
 of the image plane permits to assume that the intrinsic parameter
matrix has the form 
\begin{equation}
\Kint=\begin{pmatrix}\alpha & 0 & u'_{0}\\
0 & \alpha & v'_{0}\\
0 & 0 & 1
\end{pmatrix},\label{K}
\end{equation}
 which is the intrinsic parameter matrix of a square-pixel camera,
i.e., one for which $\tau=1$ and $\theta=\pi/2$.

The back-projected lines of cyclic points at infinity ${\bf I}=(1,i,0)^{\top}$
and $\bar{{\bf I}}=(1,-i,0)^{\top}$ are the \emph{isotropic lines}
of the camera. These lines intersect the absolute conic, for if $\bX=(X,Y,Z,0)^{\top}$
is the intersection of one of these two lines with the plane at infinity,
we have 
\[
(1,\pm i,0)^{\top}\sim\mP\bX=\Kint\Rot(X,Y,Z)^{\top},
\]
 so that 
\[
(X,Y,Z)^{\top}\sim\Rot^{\top}\Kint^{-1}(1,\pm i,0)^{\top},
\]
 and then 
\begin{align*}
X^{2}+Y^{2}+Z^{2} & =(X,Y,Z)(X,Y,Z)^{\top}\\
 & =(1,\pm i,0)\Kint^{-\top}\Rot\Rot^{\top}\Kint^{-1}(1,\pm i,0)^{\top}\\
 & =\begin{pmatrix}1 & \pm i\end{pmatrix}\begin{pmatrix}\alpha^{-2} & 0\\
0 & \alpha^{-2}
\end{pmatrix}\begin{pmatrix}1\\
\pm i
\end{pmatrix}=0.
\end{align*}

This equation can be expressed as 
\begin{equation}
\bI^{\top}\iac\bI=0,\qquad\bar{\bI}^{\top}\iac\bar{\bI}=0,\label{eq:square-pixel-IAC}
\end{equation}
 which is the square-pixel condition in terms of the image of the
absolute conic (IAC) $\bomega=(\Kint\Kint^{\top})^{-1}$.

We recall here that it is possible to obtain
a projective calibration from image point correspondences only in uncalibrated images. 
This means that, given a sufficient number of projected points $\bx_{ij}$
obtained with $N_{c}\geq2$ cameras, we can obtain a set of matrices
$\hat{\mP}_{i}$ and a set of point coordinates $\hat{\bX}_{j}$ such
that $\bx_{ij}\sim\hat{\mP}_{i}\hat{\bX}_{j}$, where $\hat{\mP}_{i}=\mP_{i}\mH^{-1}$
and $\hat{\bX}_{j}=\mH\bX_{j}$ for some non-singular $4\times4$
matrix $\mH$.
Projective reconstruction from feature correspondences is a mature topic in computer vision.
It was mostly developed during the 90's stemming from the extension of
the stereo reconstruction method of Longuet-Higgins~\cite{LonguetHiggins} to the case of 
uncalibrated images~\cite{Faugeras,HartleyCVPR92} (two views),~\cite{QuanTrifocal94,Hartley94Invariants} (three views),~\cite{Mohr93cvpr,Triggs95thegeometry} (multiple views), etc.
Nowadays there are excellent reference books collecting the contributions of many researchers in this field; 
see, for example~\cite{Hartley-Zisserman,Faugeras-Luong,YiMaLibro}.

\emph{Euclidean calibration} can be defined as the obtainment of a
matrix $\mH$ changing the projective coordinates of a given projective
calibration to some Euclidean coordinate system. It is well-known
that Euclidean calibration up to a scale factor is equivalent to the
recovery of the absolute conic at infinity $\Omega_{\infty}$
(see~\cite[p.~272]{Hartley-Zisserman},~\cite[\S
1.18]{Faugeras-Luong}), 
or any equivalent object such as the dual
absolute quadric (DAQ)~\cite{Triggs97}.
To motivate our approach, let us check the possibility of addressing
this autocalibration problem using the DAQ.
The DAQ is given by the planes tangent to the absolute conic and it is
algebraically defined as a rank-three projective mapping
$\mQ^\ast_\infty:\bP^{3\ast}\to\bP^3$. Its matrix it is
related to the matrix of the dual IAC (DIAC) by
\begin{equation}
\label{eq:DIAC-DAQ}
  \omega^\ast \sim \mP\mQ^\ast_\infty \mP^\top,
\end{equation}
where $\mP$ is the matrix of the considered projective camera. If the camera has
square pixels, it is known that the matrix of the DIAC is of the form
\begin{displaymath}
  \omega^\ast\sim
  \begin{pmatrix}
    \alpha^2 + x_0^2 & x_0 y_0 & x_0\\
    x_0y_0 & \alpha^2 + y_0^2 & y_0 \\
    x_0 & y_0 & 1
  \end{pmatrix}.
\end{displaymath}
This leads to the equations
\begin{equation}
\label{eq:quadraticDIAC}
  \begin{split}
  \omega^\ast_{12}\omega^\ast_{33}-\omega^\ast_{13}\omega^\ast_{23}&=0\\    
\omega^\ast_{33}\omega^\ast_{11}-\left(\omega^\ast_{13}\right)^2&=
\omega^\ast_{33}\omega^\ast_{22}-\left(\omega^\ast_{23}\right)^2.
  \end{split}
\end{equation}
Using relation~(\ref{eq:DIAC-DAQ}), Eqs.~(\ref{eq:quadraticDIAC})
turn out to be two quadratic equations in the coefficients of
$\mQ^\ast_\infty$ obtained for each camera $\mP$. The DAQ
$\mQ^\ast_\infty$ is homogeneous and symmetric, so it is described by
$9$ parameters. Therefore four cameras will lead to $8$ quadratic
equations which, together with the quartic constraint
$\det\mQ^\ast_\infty=0$ would determine a discrete number of possible
solutions. One more camera would allow to decide which of these
solutions is the correct one.
 
Given the large number of variables and the nonlinear nature of the
equations, this approach does not seem very promising from a practical
point of view. 
As we will see, the alternative we propose in this paper reduces the
problem to a 5th-degree equations in three variables, which allows a two dimensional parameterization of the
possible planes at infinity from which we will be able to do an
efficient search of those compatible with the intrinsic geometry of
the cameras.

\section{The six line conic variety}
\label{sec:SLCV}

Let us suppose we have a projective calibration of three square-pixel
cameras (or, as explained before, cameras with known pixel shape)
and let $\mP_{i}$ be their camera matrices and $\text{\ensuremath{\bC}}_{i}$
their optical centers, $i=1,2,3$. Let us denote by $l_{i},\bar{l}_{i}$ the
isotropic lines of camera $i$. The plane at infinity $\bpinf$ will
intersect these lines in points of the absolute conic. Therefore the
planes $\pi$ candidates to be the plane at infinity are those intersecting
the isotropic lines in points of a conic (see Fig.~\ref{fig:SixLinesAndConic}).
We are going to see that these planes are given by a 5th-degree algebraic equation in 
their coordinates, $\bpi=(u_{1},u_{2},u_{3},u_{4})^{\top}$.
To obtain this equation, some mathematical
preliminaries are needed.

\begin{figure}
\centering \setlength{\unitlength}{0.00063333in}
\begingroup\makeatletter\ifx\SetFigFont\undefined%
\gdef\SetFigFont#1#2#3#4#5{%
  \reset@font\fontsize{#1}{#2pt}%
  \fontfamily{#3}\fontseries{#4}\fontshape{#5}%
  \selectfont}%
\fi\endgroup%
{\renewcommand{\dashlinestretch}{30}
\begin{picture}(4524,2746)(0,-10)
\allinethickness{1.000pt}%
\texture{8101010 10000000 444444 44000000 11101 11000000 444444 44000000 
	101010 10000000 444444 44000000 10101 1000000 444444 44000000 
	101010 10000000 444444 44000000 11101 11000000 444444 44000000 
	101010 10000000 444444 44000000 10101 1000000 444444 44000000 }
\shade\path(12,924)(4216,924)(4512,2249)
	(648,2249)(12,924)
\path(12,924)(4216,924)(4512,2249)
	(648,2249)(12,924)
\put(1113,1413){\blacken\ellipse{48}{48}}
\put(1113,1413){\ellipse{48}{48}}
\put(1403,1822){\blacken\ellipse{48}{48}}
\put(1403,1822){\ellipse{48}{48}}
\put(2223,1856){\blacken\ellipse{48}{48}}
\put(2223,1856){\ellipse{48}{48}}
\put(2652,1844){\blacken\ellipse{48}{48}}
\put(2652,1844){\ellipse{48}{48}}
\put(3931,1707){\blacken\ellipse{48}{48}}
\put(3931,1707){\ellipse{48}{48}}
\put(3791,1453){\blacken\ellipse{48}{48}}
\put(3791,1453){\ellipse{48}{48}}
\put(1271,1057){\blacken\ellipse{38}{38}}
\put(1271,1057){\ellipse{38}{38}}
\put(2365,1452){\blacken\ellipse{38}{38}}
\put(2365,1452){\ellipse{38}{38}}
\put(3487,2203){\blacken\ellipse{48}{48}}
\put(3487,2203){\ellipse{48}{48}}
\path(684,2383)(1113,1413)
\path(2000,2542)(2219,1868)
\drawline(3231,2462)(3231,2462)
\path(2654,1850)(3089,2436)
\path(3345,2571)(3787,1455)
\path(3284,2429)(3925,1716)
\path(3284,2429)(3925,1716)
\dashline{60.000}(1394,1831)(1249,926)
\path(1243,917)(1098,12)
\path(1537,2719)(1392,1814)
\path(1344,911)(1550,435)
\dashline{60.000}(2228,1858)(2538,927)
\path(2546,912)(2701,452)
\dashline{60.000}(1984,916)(2654,1845)
\path(1796,684)(1984,918)
\dashline{60.000}(1120,1402)(1326,926)
\dashline{60.000}(3793,1449)(3983,974)(3986,967)
\dashline{60.000}(3937,1705)(4261,1347)
\put(2331,1599){\ellipse{3550}{524}}
\put(324,1008){\makebox(0,0)[lb]{{\SetFigFont{12}{14.4}{\rmdefault}{\mddefault}{\updefault}$\bpi_\infty$}}}
\put(1523,1406){\makebox(0,0)[lb]{{\SetFigFont{12}{14.4}{\rmdefault}{\mddefault}{\updefault}$\Omega_\infty$}}}
\end{picture}
} \caption{Illustration of the incidence relations between the isotropic lines
of three cameras, the plane at infinity and the absolute conic.}
\label{fig:SixLinesAndConic} 
\end{figure}

\subsection{The equation of the six-line conic variety}
\label{subsec:plucker_coordinates}
We recall, see e.g.~\cite[p.~70]{Hartley-Zisserman}, that lines of
3-space are in one-to-one correspondence with non-null singular
antisymmetric $4\times4$ matrices (or, equivalently,
  non-null rank-2 antisymmetric matrices, since antisymmetric matrices
  have even rank) defined up to a non-zero scalar
factor. The correspondence is given by the mapping that assigns to the
line $l$ passing through points $\bp,\bq$ the Pl\"ucker matrix
$\mL=\mM(\bp,\bq)=\bp\bq^{\top}-\bq\bp^{\top}$.  There is an
equivalent mapping attaching to the line determined by planes
$\balpha,\bbeta$ the matrix $\mL^{*}=\mM(\balpha,\bbeta)$.  These two
matrices are related by the transformation
$*:\mL=(m_{ij})\mapsto\mL^{*}$ where
\[
\mL^{*}=\begin{pmatrix}0 & m_{34} & m_{42} & m_{23}\\
-m_{34} & 0 & m_{14} & m_{31}\\
-m_{42} & -m_{14} & 0 & m_{12}\\
-m_{23} & -m_{31} & -m_{12} & 0
\end{pmatrix}.
\]

We also recall that the intersection of the line of Pl\"ucker matrix
$\mL$ with the plane $\bpi=(u_{1},u_{2},u_{3},u_{4})^{\top}$ is
the point $\bp=\mL\bpi$, which is zero if and only if the line $\mL$
is contained in $\bpi$.

If $\mH$ is a coordinate change $\bX'=\mH\bX$, the line $\mL$ is
written in the new coordinate system as 
\begin{equation}
\mL'=\mH\mL\mH^{\top}.\label{eq:plucker_change}
\end{equation}


The degree-two Veronese mapping $\nu_{2}$
maps a point $\bx$ to the pairwise product of its coordinates.
In particular, for $n=2,3$ we define:
\[
\begin{split}\nu_{2}(x_{1},x_{2},x_{3})^{\top} & =(x_{1}^{2},x_{1}x_{2},x_{1}x_{3},x_{2}^{2},x_{2}x_{3},x_{3}^{2})^{\top},\\
\nu_{2}(x_{1},x_{2},x_{3},x_{4})^{\top} & =(x_{1}^{2},x_{1}x_{2},x_{1}x_{3},x_{1}x_{4},x_{2}^{2},\\
& \qquad x_{2}x_{3},x_{2}x_{4},x_{3}^{2},x_{3}x_{4},x_{4}^{2})^{\top}.
\end{split}
\]

Observe that a point $\bx$ in $\bP^{2}$ belongs to the conic of matrix
$\mC=(c_{ij})$ if and only if $\bx^\top\mC\,\bx=0$, which in terms of
$\nu_2$ can be written as
\[
\nu_{2}(\bx)^{\top}\bar{\mC}=0,\quad\text{where}\quad\bar{\mC}=\Bigl(c_{11},\frac{c_{12}}{2},\frac{c_{13}}{2},c_{22},\frac{c_{23}}{2},c_{33}\Bigr)^{\top}.
\]
Hence, six points $\bq_{1},\ldots,\bq_{6}$ of the plane lie on a
conic if and only if for some non-zero vector $\bar\mC$
\begin{equation}
\label{eq:conic_as_kernel}
 \left(\nu_{2}(\bq_{1})\cdots\nu_{2}(\bq_{6})\right)^\top \bar\mC = 0,
\end{equation}
which is equivalent to the singularity of the matrix, i.e., 
\[
\det\left(\nu_{2}(\bq_{1})\cdots\nu_{2}(\bq_{6})\right)=0.
\]

Similarly, $10$ points $\bq_i$ of 3-space lie on a quadric if and only if 
\[
\det\left(\nu_{2}(\bq_{1}),\ldots,\nu_{2}(\bq_{10})\right)=0.
\]

Next, using the previous results for lines and conics, 
we characterize whether the intersection
of lines with a plane is contained in points of a conic. 
\begin{result}
\label{th:determinant_characterizes}
Given six lines with Pl\"ucker matrices $\mL_i$ and
vectors $\bpi,\ba_1,\ldots,\ba_4$ in $\Complex^4$, let us consider the
polynomial
\begin{equation}
  \begin{split}
    &D(\bpi,\ba_1,\ldots,\ba_4)=\\
    &\quad\det\left(\nu_{2}(\mL_{1}\bpi),\ldots,\nu_{2}(\mL_{6}\bpi),\nu_{2}(\ba_{1}),\ldots,\nu_{2}(\ba_{4})\right).
  \end{split}
\label{eq:determinant_definition}
\end{equation}
The set of planes $\bpi$ intersecting the six lines
in points of a conic is
defined by
\begin{displaymath}
  D(\bpi,\ba_1,\ldots,\ba_4)=0 \;\;\text{ for all
  }\;\ba_1,\ldots,\ba_4 \text{ in } \Complex^4.
\end{displaymath}
\end{result}
The proof of this result is given in Appendix~\ref{sec:Proof_determinant_characterizes}. 
%

Being each column $\nu_2(\mL_i\pi)$ of degree two in $\bpi$, the
equation $D(\bpi,\ba_1,\ldots,\ba_4)=0$ is of degree $12$ in the
coordinates of the plane. Next result, proven in Appendix~\ref{sec:Proof_determinant_factorizes}, 
shows that we can factor out four trivial linear factors and obtain an 8th-degree
polynomial in $\bpi$ which is also shown to be independent of the $\ba_j$.
\begin{result} 
\label{th:determinant_factorizes}
The set of planes $\bpi$ intersecting the six lines
  with Pl\"ucker matrices $\mL_i$ in points of a conic is given by
  the 8th-degree polynomial equation
  \begin{equation}
    \label{eq:F}
    F(\bpi) = 0
  \end{equation}
defined by the relationship
\begin{equation}
\begin{split} & D(\bpi,\ba_1,\ldots,\ba_4)=\\
 &\quad\det(\ba_{1},\ldots,\ba_{4})\,(\bpi^{\top}\ba_{1})\cdots(\bpi^{\top}\ba_{4})\, F(\bpi),
\end{split}
\label{eq:determinant_factorizes}
\end{equation}
Furthermore, the polynomial $F$ does not depend on the variables $\ba_j$. 
\end{result}

The surface of $\bP^{3*}$ (set of planes of 3D space) 
given by the planes intersecting the six lines in points of 
a conic will be called the \emph{six-line conic variety} (SLCV).

\subsection{The SLCV given by the isotropic lines of three finite
 square-pixel cameras}

In the particular case of three finite square-pixel
cameras (i.e., cameras such that the optical center is
  a finite point), the
configuration of the isotropic lines, which intersect  pairwise
in the optical centers $\bC_{i}$, allows to further simplify the
factorization from
Result~\ref{th:determinant_factorizes}. 
\begin{result}
Let us consider six lines
  with Pl\"ucker matrices $\mL_i$ such that the pairs $\{\mL_1,\mL_2\}$,
  $\{\mL_3,\mL_4\}$ and $\{\mL_5,\mL_6\}$ are intersecting in points
  $\bC_1,\bC_2$ and $\bC_3$, respectively. Then the set of planes
  $\bpi$ intersecting the six lines in points of a conic and not
  passing through any of the intersections $\bC_i$ are 
  real zeros of a 5th-degree polynomial $G$ defined by:
\begin{equation}
F(\bpi)=(\bpi^{\top}\bC_{1})(\bpi^{\top}\bC_{2})(\bpi^{\top}\bC_{3})\,
G(\bpi).
\label{eq:5thdegree_2}
\end{equation} 
\end{result}
\begin{proof}
  We have proved in Result~\ref{th:determinant_factorizes} that the
  set of planes intersecting the lines $\mL_i$ in points of a conic is
  given by the zeroes of an 8th-degree polynomial, $F(\bpi)$. Since
  the lines intersect by pairs at the points $\bC_i$, there are
  trivial solutions of the equation $F(\bpi)=0$, namely those
  corresponding to planes passing through any of the points $\bC_i$.
  In fact, any plane through any of the $\bC_i$ intersects the six
  lines in at most five different points. Since five points always lie
  on a conic, all planes through any of the $\bC_i$ are zeros of
  $F$.  Therefore, we can further factorize the polynomial $F$ as in
  equation~\eqref{eq:5thdegree_2}. The planes intersecting the lines
  in points of a conic and not passing through the points $\bC_i$ are
  solutions of (\ref{eq:F}) and, since
  $\bpi^{\top}\bC_{i}\not=0$, they must be solutions of  $G(\bpi)=0$. \qed\end{proof}


\begin{corollary}Given a projective reconstruction for three
finite cameras, the planes at infinity compatible with the 
square-pixel property of the cameras are real zeros of a $5$th-degree
polynomial $G$ given by:
\begin{equation}
F(\bpi)=(\bpi^{\top}\bC_{1})(\bpi^{\top}\bC_{2})(\bpi^{\top}\bC_{3})\,
G(\bpi).
\label{eq:5thdegree}
\end{equation}\end{corollary}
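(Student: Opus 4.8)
The plan is to obtain the Corollary as a direct specialization of the Result immediately preceding it to the three pairs of isotropic lines. First I would identify the six lines of that Result with the isotropic lines $l_1,\bar l_1,l_2,\bar l_2,l_3,\bar l_3$ of the three cameras. For each camera $i$, the two isotropic lines are the back-projections of the complex-conjugate cyclic points $\bI$ and $\bar\bI$ through the optical center; hence they are themselves complex conjugates and both pass through $\bC_{i}$, meeting there. This is exactly the pairwise-incidence hypothesis of the preceding Result, with intersection points $\bC_1,\bC_2,\bC_3$, so its factorization \eqref{eq:5thdegree_2} applies verbatim and yields the asserted factorization \eqref{eq:5thdegree}. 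Moreover, since the lines come in conjugate pairs, the polynomials $F$ and $G$ have real coefficients, which is why one searches among real planes.

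Second, I would connect the geometric SLCV condition to the square-pixel property. The computation leading to \eqref{eq:square-pixel-IAC} in Sect.~\ref{sec:Background} shows that a camera has square pixels precisely when its isotropic lines meet the absolute conic $\Omega_{\infty}$. Reading this in the reverse direction, a candidate plane $\bpi$ is the plane at infinity of a Euclidean upgrade rendering the three cameras square-pixel if and only if $\bpi$ cuts all six isotropic lines in points lying on a single conic, namely the would-be absolute conic inside $\bpi$. By Result~\ref{th:determinant_factorizes}, the set of such planes is exactly the zero locus of the $8$th-degree polynomial $F$, i.e.\ those $\bpi$ with $F(\bpi)=0$. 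Thus every plane at infinity compatible with the square-pixel property satisfies $F(\bpi)=0$.

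Third, I would discard the spurious linear factors. Because the three cameras are finite, their optical centers $\bC_{i}$ are finite points and therefore do not lie on any plane at infinity; equivalently $\bpi^{\top}\bC_{i}\neq 0$ for every candidate $\bpi$. Substituting the factorization \eqref{eq:5thdegree} into the equation $F(\bpi)=0$ and cancelling the three nonzero factors $\bpi^{\top}\bC_{1},\bpi^{\top}\bC_{2},\bpi^{\top}\bC_{3}$ forces $G(\bpi)=0$, which is the claimed $5$th-degree equation.

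I do not expect a genuinely hard computational step here, since all the algebraic work has already been carried out in Result~\ref{th:determinant_factorizes} and in the Result preceding the Corollary; the content of the statement is the correct bookkeeping of hypotheses. The one point requiring care is the logical direction in the second paragraph, namely the translation of \emph{compatible with the square-pixel property} into \emph{cuts the six isotropic lines in points of a conic}, together with the finiteness assumption that guarantees $\bpi^{\top}\bC_{i}\neq 0$ so that the three linear factors may legitimately be removed.
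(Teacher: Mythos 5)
Your proposal is correct and takes essentially the same route as the paper, which proves the corollary by directly specializing the preceding factorization result to the three conjugate pairs of isotropic lines meeting pairwise at the optical centers $\bC_i$; your version merely spells out the details (why the isotropic lines meet at $\bC_i$, why square-pixel compatibility translates into the conic-intersection condition, and why finiteness gives $\bpi^{\top}\bC_i\neq 0$) that the paper leaves as ``immediate.'' The only caveat is that your ``if and only if'' in the second paragraph overstates matters---a plane cutting the six lines in points of a conic need not admit a genuine Euclidean upgrade (the conic must be a valid absolute conic), which is why the paper speaks only of \emph{candidate} planes---but this is harmless here since the corollary, and your argument, only use the forward implication.
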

\begin{proof}It is an immediate consequence of the previous result and
the fact that the isotropic lines intersect by pairs in the optical
centers of the cameras.\qed\end{proof}

In the context of camera
autocalibration, we will also use the term SLCV for the 5th-degree
variety, from which the three trivial linear factors have been removed.

A straightforward application of this result is the direct
obtainment of the candidate planes at infinity, when two
points at infinity are known, as the real solutions of a fifth-degree polynomial
equation in one variable (see~\cite{Carballeira}).

\begin{corollary} If two points
of the plane at infinity are known, there are at most five candidate
planes at infinity which can be found solving the 5th-degree equation
in the homogeneous coordinates $\lambda:\mu$ 
\begin{displaymath}
G(\lambda\balpha+\mu\bbeta)=0,
\end{displaymath}
 where $\balpha$ and $\bbeta$ are any two planes intersecting in
the line determined by the two points at infinity. \end{corollary}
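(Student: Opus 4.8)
The plan is to exploit the fact that fixing two points at infinity confines the sought plane to a one-parameter family, and then to restrict the quintic $G$ to that family. Knowing two distinct points $\bp,\bq$ of the plane at infinity forces $\bpinf$ to contain both of them, and hence to contain the line $\ell$ they span. So the candidate planes all lie in the pencil of planes through $\ell$. First I would recall that this pencil is precisely the projective line of $\bP^{3\ast}$ spanned by any two distinct planes $\balpha,\bbeta$ that contain $\ell$: every plane through $\ell$ is $\bpi=\lambda\balpha+\mu\bbeta$ for a unique ratio $\lambda:\mu$, and conversely each ratio yields a plane through $\ell$ (this is exactly the hypothesis on $\balpha,\bbeta$ in the statement). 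This reduces the search from the whole $\bP^{3\ast}$ to a single line of it.

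Next I would invoke the previous Corollary, which states that the planes at infinity compatible with the square-pixel property of the three finite cameras are the real zeros of the 5th-degree polynomial $G$. Since the sought $\bpinf$ must simultaneously lie in the pencil and satisfy $G(\bpinf)=0$, I substitute the parameterization $\bpi=\lambda\balpha+\mu\bbeta$ into $G$. Because each of the four homogeneous coordinates of $\bpi$ is a homogeneous linear function of $(\lambda,\mu)$, and $G$ is homogeneous of degree five in those coordinates, the composition $g(\lambda,\mu):=G(\lambda\balpha+\mu\bbeta)$ is a homogeneous polynomial of degree five in the two variables $\lambda,\mu$.

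The conclusion then follows from the fundamental theorem of algebra applied to binary forms: a nonzero homogeneous polynomial of degree five in $(\lambda,\mu)$ factors over $\Complex$ into five linear forms, so it has at most five distinct roots in $\bP^{1}$, i.e.\ at most five admissible ratios $\lambda:\mu$. Each root corresponds to exactly one plane of the pencil, whence there are at most five candidate planes, and the real planes at infinity are those arising from the real roots. The only point requiring care --- and the main obstacle, such as it is --- is checking that the restriction $g$ is not identically zero, for otherwise the pencil would supply infinitely many solutions. This degenerate case occurs only when the whole pencil through $\ell$ consists of SLCV planes, which does not happen for cameras in general position; under this genericity $G(\lambda\balpha+\mu\bbeta)=0$ is a genuine quintic equation in $\lambda:\mu$, exactly as asserted.
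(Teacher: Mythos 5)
Your proof is correct and is exactly the ``straightforward application'' the paper intends: the paper gives no explicit proof of this corollary, and your argument---restricting the quintic $G$ to the pencil $\lambda\balpha+\mu\bbeta$ of planes through the line spanned by the two known points, obtaining a binary quintic with at most five roots in $\bP^1$---is the intended one. Your closing caveat that the restriction must not vanish identically is a point the paper glosses over (and is genuinely needed: by Result~\ref{res:included_pencils} whole pencils contained in the SLCV do exist, e.g.\ those based on the intersection line of two principal planes), but as you note this is excluded for the line joining two points at infinity under genericity of the camera configuration.
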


\subsection{Singularity of the principal planes}

From now on, we suppose as above that the six lines are the isotropic
lines of three square-pixel cameras. The following result is the key
to find a computationally efficient way to parameterize the set of
associated candidate planes. For our configuration
of three cameras, we will say that a principal plane is \emph{generic}
if it does not contain the projection centers of any of the other
two cameras. 

We recall that a line intersects a complex projective
  hypersurface (or variety) of degree $d$ in $d$ points, counted with
  their multiplicity.  A point $\bp$ of the hypersurface $F(\bx)=0$ is
  a singular (multiple) point of multiplicity $r$ if $\lambda=0$ is an
  $r$-th order root of the equation $F(\bp +\lambda\bq)=0$, for any
  $\bq$ not in the hypersurface. Therefore, the line $\bp\bq$
  intersects the variety in at most $d-r$ additional points.  Singular
  points are useful to obtain parametrizations of the variety by
  computing its intersections with the lines through the point.
For instance, given a 5-th degree variety with a point $\bp$ of multiplicity~3, 
for each line through $\bp$ there are at most two additional points
of intersection of the line with the variety. 
The approach 
of using a triple point to parametrize a 5-th degree variety is
illustrated in Fig.~\ref{fig:IllustrationTriplePoint} for the case of a 2D curve.
\begin{figure}[tbp]
\centering{}
\setlength{\unitlength}{0.00032489in}
\begingroup\makeatletter\ifx\SetFigFont\undefined%
\gdef\SetFigFont#1#2#3#4#5{%
  \reset@font\fontsize{#1}{#2pt}%
  \fontfamily{#3}\fontseries{#4}\fontshape{#5}%
  \selectfont}%
\fi\endgroup%
{\renewcommand{\dashlinestretch}{30}
\begin{picture}(6134,7316)(0,-10)
\put(4874,6153){\blacken\ellipse{134}{134}}
\put(4874,6153){\ellipse{134}{134}}
\put(2054,3663){\blacken\ellipse{134}{134}}
\put(2054,3663){\ellipse{134}{134}}
\thicklines
\put(3862,3498){\ellipse{3616}{6450}}
\thinlines
\put(3746,5145){\blacken\ellipse{134}{134}}
\put(3746,5145){\ellipse{134}{134}}
\thicklines
\put(2055,3662){\ellipse{382}{382}}
\put(2051,3663){\ellipse{250}{250}}
\dottedline{45}(22,1841)(6112,7240)
\path(507,7268)(507,7267)(508,7266)
	(509,7263)(510,7258)(513,7251)
	(516,7241)(520,7229)(525,7214)
	(531,7196)(538,7175)(546,7151)
	(556,7125)(566,7095)(577,7063)
	(589,7028)(602,6990)(615,6951)
	(630,6909)(645,6866)(661,6821)
	(677,6774)(694,6727)(712,6677)
	(730,6627)(748,6575)(768,6522)
	(787,6468)(808,6413)(829,6356)
	(851,6298)(874,6238)(897,6177)
	(922,6113)(947,6048)(974,5980)
	(1001,5911)(1030,5838)(1060,5764)
	(1092,5687)(1124,5608)(1158,5527)
	(1192,5445)(1227,5363)(1264,5277)
	(1300,5193)(1335,5112)(1369,5035)
	(1401,4963)(1430,4896)(1458,4835)
	(1483,4779)(1505,4728)(1525,4683)
	(1543,4643)(1559,4608)(1573,4577)
	(1585,4550)(1595,4527)(1604,4507)
	(1612,4489)(1619,4473)(1625,4459)
	(1631,4446)(1637,4434)(1642,4422)
	(1648,4409)(1654,4396)(1661,4382)
	(1669,4365)(1678,4347)(1689,4326)
	(1701,4302)(1715,4274)(1731,4243)
	(1749,4208)(1769,4168)(1792,4124)
	(1817,4076)(1845,4024)(1875,3967)
	(1907,3906)(1942,3843)(1977,3777)
	(2014,3710)(2052,3643)(2096,3567)
	(2138,3493)(2178,3425)(2216,3362)
	(2250,3304)(2281,3253)(2309,3207)
	(2333,3167)(2354,3133)(2372,3104)
	(2387,3079)(2400,3059)(2410,3042)
	(2418,3028)(2426,3016)(2432,3007)
	(2437,2999)(2442,2991)(2446,2984)
	(2451,2977)(2457,2969)(2464,2959)
	(2472,2949)(2482,2936)(2494,2921)
	(2508,2903)(2525,2882)(2545,2858)
	(2568,2830)(2594,2799)(2623,2765)
	(2655,2728)(2690,2689)(2728,2648)
	(2766,2606)(2806,2565)(2856,2515)
	(2905,2470)(2949,2430)(2989,2396)
	(3024,2366)(3054,2341)(3080,2320)
	(3100,2303)(3117,2289)(3130,2279)
	(3140,2270)(3149,2263)(3156,2258)
	(3162,2253)(3168,2250)(3174,2246)
	(3182,2242)(3192,2239)(3204,2234)
	(3220,2230)(3239,2225)(3262,2219)
	(3289,2213)(3321,2208)(3356,2203)
	(3396,2200)(3438,2198)(3481,2200)
	(3524,2206)(3564,2214)(3601,2225)
	(3635,2236)(3664,2248)(3689,2259)
	(3711,2269)(3728,2279)(3743,2288)
	(3755,2296)(3764,2303)(3772,2310)
	(3778,2316)(3784,2323)(3790,2330)
	(3796,2339)(3803,2348)(3811,2360)
	(3821,2373)(3833,2390)(3847,2410)
	(3864,2433)(3884,2461)(3907,2494)
	(3932,2531)(3959,2573)(3988,2620)
	(4016,2670)(4041,2719)(4065,2768)
	(4086,2816)(4106,2860)(4122,2901)
	(4137,2937)(4150,2970)(4161,2998)
	(4170,3022)(4177,3042)(4184,3059)
	(4189,3074)(4194,3087)(4198,3098)
	(4201,3110)(4204,3121)(4208,3133)
	(4211,3147)(4214,3164)(4218,3183)
	(4222,3206)(4226,3234)(4231,3267)
	(4236,3305)(4241,3349)(4247,3399)
	(4252,3455)(4256,3516)(4260,3581)
	(4262,3648)(4262,3712)(4261,3774)
	(4259,3833)(4256,3888)(4253,3938)
	(4250,3984)(4246,4024)(4243,4059)
	(4240,4090)(4237,4116)(4235,4139)
	(4232,4159)(4230,4176)(4228,4191)
	(4225,4205)(4223,4218)(4220,4231)
	(4217,4245)(4214,4259)(4210,4275)
	(4205,4294)(4199,4315)(4192,4339)
	(4183,4366)(4173,4398)(4161,4433)
	(4148,4472)(4132,4515)(4115,4562)
	(4095,4610)(4074,4660)(4051,4710)
	(4025,4761)(3999,4808)(3974,4852)
	(3950,4891)(3928,4925)(3908,4955)
	(3891,4981)(3876,5004)(3863,5023)
	(3852,5039)(3843,5053)(3836,5064)
	(3829,5074)(3823,5083)(3817,5091)
	(3811,5099)(3805,5106)(3797,5114)
	(3788,5121)(3777,5129)(3764,5138)
	(3748,5147)(3729,5157)(3707,5168)
	(3681,5178)(3652,5188)(3618,5198)
	(3582,5206)(3542,5212)(3501,5215)
	(3459,5214)(3418,5209)(3379,5202)
	(3344,5193)(3311,5184)(3283,5175)
	(3259,5165)(3238,5156)(3221,5148)
	(3207,5140)(3195,5132)(3185,5125)
	(3178,5118)(3171,5111)(3164,5104)
	(3158,5097)(3151,5088)(3142,5079)
	(3132,5067)(3120,5054)(3105,5039)
	(3087,5020)(3064,4998)(3038,4973)
	(3007,4943)(2972,4909)(2933,4871)
	(2890,4828)(2844,4781)(2796,4730)
	(2756,4686)(2717,4641)(2679,4598)
	(2644,4555)(2612,4516)(2582,4478)
	(2555,4444)(2531,4414)(2510,4386)
	(2492,4361)(2477,4340)(2463,4320)
	(2452,4303)(2442,4288)(2433,4275)
	(2426,4262)(2419,4251)(2412,4239)
	(2406,4228)(2399,4216)(2392,4203)
	(2384,4188)(2374,4172)(2363,4153)
	(2350,4132)(2336,4107)(2318,4079)
	(2299,4047)(2277,4011)(2252,3971)
	(2224,3926)(2194,3878)(2162,3825)
	(2128,3770)(2093,3712)(2057,3653)
	(2020,3591)(1985,3530)(1952,3473)
	(1922,3420)(1895,3372)(1871,3330)
	(1851,3293)(1834,3262)(1819,3236)
	(1808,3214)(1799,3197)(1792,3184)
	(1787,3174)(1784,3167)(1781,3161)
	(1779,3157)(1778,3153)(1777,3149)
	(1775,3144)(1773,3138)(1769,3130)
	(1765,3119)(1758,3104)(1750,3086)
	(1739,3063)(1726,3034)(1710,3000)
	(1691,2959)(1669,2912)(1645,2859)
	(1617,2800)(1587,2735)(1555,2665)
	(1522,2593)(1492,2527)(1463,2462)
	(1434,2398)(1408,2338)(1382,2281)
	(1359,2228)(1338,2179)(1319,2135)
	(1302,2096)(1287,2061)(1274,2030)
	(1262,2003)(1252,1979)(1243,1958)
	(1235,1940)(1228,1923)(1222,1908)
	(1217,1894)(1211,1881)(1206,1867)
	(1200,1853)(1194,1838)(1187,1822)
	(1180,1803)(1172,1782)(1162,1758)
	(1151,1731)(1138,1699)(1124,1664)
	(1108,1624)(1090,1580)(1070,1530)
	(1049,1477)(1026,1419)(1001,1357)
	(975,1292)(949,1226)(922,1158)
	(893,1083)(865,1011)(838,942)
	(813,876)(790,814)(768,756)
	(747,701)(728,649)(710,600)
	(694,554)(678,510)(663,468)
	(649,428)(636,390)(623,353)
	(611,319)(600,285)(589,253)
	(579,223)(570,195)(561,169)
	(553,145)(546,123)(540,103)
	(534,86)(529,72)(526,60)
	(523,51)(520,44)(519,39)
	(518,35)(517,34)(517,33)
\put(1282,3588){\makebox(0,0)[lb]{\smash{{\SetFigFont{12}{14.4}{\rmdefault}{\mddefault}{\updefault}$\bp_0$}}}}
\end{picture}
}
\caption{Illustration of the approach to obtain the points of the SLCV.\protect \\
The projective plane curve $G(\bp)=0$ is of degree five and has $\bp_{0}$ 
as a triple point. Therefore, substituting in this equation the parametric
equation of a line through this point, results in an equation of the
form $G(\lambda\bp_{0}+\mu\bq_{0})=\mu^{3}H(\lambda,\mu)$, so that
the other two points of intersection of the line and the curve are
obtained by solving the quadratic equation $H(\lambda,\mu)=0$.
}
\label{fig:IllustrationTriplePoint}
\end{figure}

Result~\ref{res:sing_planes} below provides us with three very useful
multiple planes in the SLCV (points, if we interpret the SLCV as
a surface in the space of planes, $\bP^{3*}$).

\begin{result} \label{res:sing_planes} Any generic
principal plane is a singularity of multiplicity three of the variety
of candidate planes $G(\bpi)=0$. 
\end{result} 

Observe that, by point-plane duality, the lines through 
a point $\bpi$ of $\bP^{3*}$ correspond in $\bP^{3}$ to pencils of planes, 
i.e., sets of planes through a given line (its base) or, equivalently, 
linear combinations of two given planes.
Result~\ref{res:sing_planes}, proven in Appendix~\ref{sec:Proof_sing_planes}, suggests to 
parameterize the set of
candidate planes considering the pencils of planes with base contained
in one of the principal planes. 
We will denote by $\bpi_{i}$, $i=1,2,3$ the principal planes of the cameras.
Let us assume that
$\bpi_{1}$ is a generic principal plane. To each pencil of planes
that includes $\bpi_{1}$ we will associate two candidate planes.
To parameterize the pencil, we can consider its element $\bxi$ through
$\bC_{2}$, so the pencil is given by the planes of the form $\lambda\bpi_{1}+\mu\bxi$
(see Fig.~\ref{fig:SixLinesAlgorithm}).

\begin{figure}
\centering 
\setlength{\unitlength}{0.00063333in}
\begingroup\makeatletter\ifx\SetFigFont\undefined%
\gdef\SetFigFont#1#2#3#4#5{%
  \reset@font\fontsize{#1}{#2pt}%
  \fontfamily{#3}\fontseries{#4}\fontshape{#5}%
  \selectfont}%
\fi\endgroup%
{\renewcommand{\dashlinestretch}{30}
\begin{picture}(4731,3705)(0,-10)
\allinethickness{1.000pt}%
\texture{8101010 10000000 444444 44000000 11101 11000000 444444 44000000 
	101010 10000000 444444 44000000 10101 1000000 444444 44000000 
	101010 10000000 444444 44000000 11101 11000000 444444 44000000 
	101010 10000000 444444 44000000 10101 1000000 444444 44000000 }
\shade\path(1004,586)(1207,12)(1476,364)(1004,586)
\path(1004,586)(1207,12)(1476,364)(1004,586)
\path(3360,31)(3716,1224)
\path(3724,1202)(4110,147)
\path(1473,1521)(1409,292)
\path(1473,1521)(1409,292)
\texture{44000000 aaaaaa aa000000 8a888a 88000000 aaaaaa aa000000 888888 
	88000000 aaaaaa aa000000 8a8a8a 8a000000 aaaaaa aa000000 888888 
	88000000 aaaaaa aa000000 8a888a 88000000 aaaaaa aa000000 888888 
	88000000 aaaaaa aa000000 8a8a8a 8a000000 aaaaaa aa000000 888888 }
\shade\path(19,375)(4246,375)(4719,2026)
	(711,2026)(12,382)(19,375)
\path(19,375)(4246,375)(4719,2026)
	(711,2026)(12,382)(19,375)
\texture{8101010 10000000 444444 44000000 11101 11000000 444444 44000000 
	101010 10000000 444444 44000000 10101 1000000 444444 44000000 
	101010 10000000 444444 44000000 11101 11000000 444444 44000000 
	101010 10000000 444444 44000000 10101 1000000 444444 44000000 }
\shade\path(2445,2006)(1076,376)(236,2726)
	(2007,3678)(2445,2027)(2445,2006)
\path(2445,2006)(1076,376)(236,2726)
	(2007,3678)(2445,2027)(2445,2006)
\path(694,2969)(2105,1600)
\path(694,2969)(2105,1600)
\path(1569,3439)(1439,808)
\path(1569,3439)(1439,808)
\drawline(4482,3598)(4482,3598)
\path(3327,2277)(3713,1222)
\put(3722,1222){\blacken\ellipse{40}{40}}
\put(3722,1222){\ellipse{40}{40}}
\put(1511,2177){\blacken\ellipse{40}{40}}
\put(1511,2177){\ellipse{40}{40}}
\path(3721,1236)(4077,2429)
\put(1655,3135){\makebox(0,0)[lb]{{\SetFigFont{12}{14.4}{\rmdefault}{\mddefault}{\updefault}$l'_1$}}}
\put(1568,2210){\makebox(0,0)[lb]{{\SetFigFont{12}{14.4}{\rmdefault}{\mddefault}{\updefault}$\bC_1$}}}
\put(2074,460){\makebox(0,0)[lb]{{\SetFigFont{12}{14.4}{\rmdefault}{\mddefault}{\updefault}$\bxi$}}}
\put(3337,1145){\makebox(0,0)[lb]{{\SetFigFont{12}{14.4}{\rmdefault}{\mddefault}{\updefault}$\bC_2$}}}
\put(3187,2335){\makebox(0,0)[lb]{{\SetFigFont{12}{14.4}{\rmdefault}{\mddefault}{\updefault}$l_2$}}}
\put(3963,2494){\makebox(0,0)[lb]{{\SetFigFont{12}{14.4}{\rmdefault}{\mddefault}{\updefault}$l'_2$}}}
\put(638,1800){\makebox(0,0)[lb]{{\SetFigFont{12}{14.4}{\rmdefault}{\mddefault}{\updefault}$\bpi_1$}}}
\put(546,2619){\makebox(0,0)[lb]{{\SetFigFont{12}{14.4}{\rmdefault}{\mddefault}{\updefault}$l_1$}}}
\end{picture}
} 
\caption{Elements in the parameterization of the SLCV given by Result~\ref{res:sing_planes}.}
\label{fig:SixLinesAlgorithm} 
\end{figure}

Result~\ref{res:sing_planes} guarantees that the quintic polynomial $G(\lambda\bpi_{1}+\mu\bxi)$ factorizes as 
\begin{equation}
G(\lambda{\bpi_{1}}+\mu\bxi)=\mu^{3}H(\lambda,\mu)\label{eq:triple_solution}
\end{equation}
 where $H$ is a homogeneous polynomial of degree $2$ whose zeroes
provide the two candidate planes associated to $\bxi$.

\begin{remark}
\label{res:non_generic_planes_3_cameras}
Observe that only one generic principal plane is
  required in order to parametrize the set of solutions in the
  proposed way. The non-existence of generic principal planes
  constitutes a highly singular case, in which each optical center
  lies in the intersection of its principal plane with one of the other two. 
  Result~\ref{res:sing_planes} can be easily extended to show that
  a non-generic principal plane containing exactly two optical centers 
  is a singularity of multiplicity two. 
  Therefore, using such a plane to parametrize the set of candidate planes $G(\bpi)=0$ 
  implies the solution of a polynomial equation of degree three for each line in the plane.
\end{remark}  

Finally, the following result can be useful as will be
  discussed in Sect.~\ref{sec:discussion}.
\begin{result}
\label{res:included_pencils}
  The pencils of planes determined by any pair of principal planes is
  contained in the variety of candidate planes.
\end{result}
\begin{proof}
  It is enough to observe that the base of such a pencil intersect the
  corresponding isotropic lines in four points and any six points,
  four of which are aligned, lie on a conic.
\end{proof}

\section{Algorithms}
\label{sec:Algorithms}

We have seen that the set of planes at infinity
compatible with three cameras with square pixels is a surface given by
a fifth-degree equation
\[
G(\bpi)=0.
\]
This will allow to compute the plane at infinity by means of a
two-dimensional search if enough additional data about the cameras or
the scene are available. In the first part of this section we will
address a convenient way to parameterize the surface of candidate
planes, while in the second part we will focus on the particular case
in which the additional information stems from the presence of two or
more auxiliary square-pixel cameras.

\subsection{Parameterization of the candidate planes}

Exploiting Result~\ref{res:sing_planes} and assuming that the first
principal plane $\bpi_{1}$ is generic, our algorithm will sweep the set
of candidate planes at infinity corresponding to three square-pixel
cameras. We will define a one-to-two mapping attaching
to each real line $l$ of $\bpi_{1}$ the two candidate planes $\bchi_1$,
$\bchi_2$ containing $l$, i.e., the intersection of the pencil of
planes through $l$ with the variety of candidate planes.
Hence, if we parameterize the set of real lines $l$ of $\bpi_1$ that do not
contain the optical center $\bC_1$, we will obtain accordingly a
two-fold parameterization of the set of candidate planes.  To this
purpose we fix a point $\br$ in $l_{1}$ which, together with the
optical center $\bC_1$ will parameterize the points $\bq$ in  $l_1$ as
$\bq=\br+z\bC_{1}$, $z$ being a complex number\footnote{The fixed
  point $\br$ may be chosen as the intersection of $l_{1}$ with the
  plane whose coordinates coincide with those of the point
  $\bC_1$.}. We now
define $l$ as the line through $\bq$ and $\bar\bq$ (see Fig.~\ref{fig:SixLinesAlgorithm_notation}). 
\begin{figure*}
\centering \setlength{\unitlength}{0.00047489in}
\begingroup\makeatletter\ifx\SetFigFont\undefined%
\gdef\SetFigFont#1#2#3#4#5{%
  \reset@font\fontsize{#1}{#2pt}%
  \fontfamily{#3}\fontseries{#4}\fontshape{#5}%
  \selectfont}%
\fi\endgroup%
{\renewcommand{\dashlinestretch}{30}
\begin{picture}(7896,5464)(0,-10)
\put(1813,1611){\blacken\ellipse{70}{70}}
\put(1813,1611){\ellipse{70}{70}}
\put(3437,543){\blacken\ellipse{70}{70}}
\put(3437,543){\ellipse{70}{70}}
\put(4213,2271){\blacken\ellipse{70}{70}}
\put(4213,2271){\ellipse{70}{70}}
\put(3785,4690){\blacken\ellipse{70}{70}}
\put(3785,4690){\ellipse{70}{70}}
\put(3624,964){\blacken\ellipse{70}{70}}
\put(3624,964){\ellipse{70}{70}}
\put(3474,1017){\blacken\ellipse{70}{70}}
\put(3474,1017){\ellipse{70}{70}}
\put(5747,874){\blacken\ellipse{70}{70}}
\put(5747,874){\ellipse{70}{70}}
\put(2175,2799){\blacken\ellipse{70}{70}}
\put(2175,2799){\ellipse{70}{70}}
\put(5355,2544){\blacken\ellipse{70}{70}}
\put(5355,2544){\ellipse{70}{70}}
\put(2055,4047){\blacken\ellipse{70}{70}}
\put(2055,4047){\ellipse{70}{70}}
\put(5636,4257){\blacken\ellipse{70}{70}}
\put(5636,4257){\ellipse{70}{70}}
\thicklines
\put(3969,3479){\ellipse{4320}{2430}}
\thinlines
\path(3788,4684)(5742,873)(4204,2279)(3788,4684)
\thicklines
\path(33,3590)(4668,1070)(7863,3815)
	(3498,5300)(33,3590)
\path(2138,4870)(2056,4052)
\dottedline{150}(2187,2821)(2069,2467)
\dottedline{150}(2056,4029)(1914,2585)
\path(2348,3362)(2177,2802)
\path(1912,2579)(1740,846)
\path(2073,2478)(1604,942)
\path(3860,1497)(3271,171)
\dottedline{113}(4210,2281)(3863,1506)
\path(4395,2672)(4220,2280)
\path(3532,1686)(3405,75)
\dottedline{113}(3791,4708)(3539,1690)
\path(3853,5405)(3790,4692)
\path(5210,3114)(5352,2533)
\dottedline{113}(5353,2549)(5533,1795)
\path(5529,1798)(5904,230)
\dottedline{113}(5640,4217)(5715,2012)
\path(5717,1973)(5775,196)
\path(5626,4725)(5636,4250)
\put(2014,2024){\makebox(0,0)[lb]{\smash{{\SetFigFont{12}{14.4}{\rmdefault}{\mddefault}{\updefault}$\bpi_3$}}}}
\put(5269,1346){\makebox(0,0)[lb]{\smash{{\SetFigFont{12}{14.4}{\rmdefault}{\mddefault}{\updefault}$\boldsymbol{\xi}$}}}}
\put(1362,1053){\makebox(0,0)[lb]{\smash{{\SetFigFont{12}{14.4}{\rmdefault}{\mddefault}{\updefault}$l_3$}}}}
\put(5949,304){\makebox(0,0)[lb]{\smash{{\SetFigFont{12}{14.4}{\rmdefault}{\mddefault}{\updefault}$\bar{l}_2$}}}}
\put(3244,1068){\makebox(0,0)[lb]{\smash{{\SetFigFont{12}{14.4}{\rmdefault}{\mddefault}{\updefault}$\br$}}}}
\put(3473,78){\makebox(0,0)[lb]{\smash{{\SetFigFont{12}{14.4}{\rmdefault}{\mddefault}{\updefault}$l_1$}}}}
\put(1812,840){\makebox(0,0)[lb]{\smash{{\SetFigFont{12}{14.4}{\rmdefault}{\mddefault}{\updefault}$\bar{l}_3$}}}}
\put(1875,1454){\makebox(0,0)[lb]{\smash{{\SetFigFont{12}{14.4}{\rmdefault}{\mddefault}{\updefault}$\bC_3$}}}}
\put(837,3505){\makebox(0,0)[lb]{\smash{{\SetFigFont{12}{14.4}{\rmdefault}{\mddefault}{\updefault}$\boldsymbol{\chi}_i(z)$}}}}
\put(4167,1922){\makebox(0,0)[lb]{\smash{{\SetFigFont{12}{14.4}{\rmdefault}{\mddefault}{\updefault}$\bar{\bq}$}}}}
\put(5842,805){\makebox(0,0)[lb]{\smash{{\SetFigFont{12}{14.4}{\rmdefault}{\mddefault}{\updefault}$\bC_2$}}}}
\put(5500,349){\makebox(0,0)[lb]{\smash{{\SetFigFont{12}{14.4}{\rmdefault}{\mddefault}{\updefault}$l_2$}}}}
\put(3052,271){\makebox(0,0)[lb]{\smash{{\SetFigFont{12}{14.4}{\rmdefault}{\mddefault}{\updefault}$\bar{l}_1$}}}}
\put(5774,1605){\makebox(0,0)[lb]{\smash{{\SetFigFont{12}{14.4}{\rmdefault}{\mddefault}{\updefault}$\bpi_2$}}}}
\put(3813,1128){\makebox(0,0)[lb]{\smash{{\SetFigFont{12}{14.4}{\rmdefault}{\mddefault}{\updefault}$\bpi_1$}}}}
\put(3690,831){\makebox(0,0)[lb]{\smash{{\SetFigFont{12}{14.4}{\rmdefault}{\mddefault}{\updefault}$\bar{\br}$}}}}
\put(3529,453){\makebox(0,0)[lb]{\smash{{\SetFigFont{12}{14.4}{\rmdefault}{\mddefault}{\updefault}$\bC_1$}}}}
\put(3901,4833){\makebox(0,0)[lb]{\smash{{\SetFigFont{12}{14.4}{\rmdefault}{\mddefault}{\updefault}$\bq=\br+z\bC_1$}}}}
\end{picture}
}
\caption{Geometric elements used in Algorithm~\ref{alg:Algorithm1Parametrization}. 
The three square-pixel cameras are represented by their 
optical centers $\bC_j$ and their isotropic lines $l_j,\bar{l}_j$ contained in the principal planes $\bpi_j$, $j=1,2,3$.
Planes $\bpi_1$ and $\bxi$ are generators of the pencil of planes through the line joining $\bq(z)$ and $\bar{\bq}(z)$.
Within the pencil, solution planes $\bchi_i(z)$, $i=1,2$ intersect the six isotropic lines in points of a conic.
}
\label{fig:SixLinesAlgorithm_notation} 
\end{figure*}

We will consider as generators of the pencil of planes through $l$ the principal
plane $\bpi_{1}$ and a plane $\bxi$ passing through $l$ and $\bC_{2}$,
so the planes of the pencil are of the form
\begin{displaymath}
  \bchi = \lambda \bpi_1 + \mu \bxi.
\end{displaymath}

As explained in the previous section, the solutions contained in the
pencil through $l$ will be the zeroes of the polynomial
$H(\lambda,\mu)$ defined in~\eqref{eq:triple_solution}. Since it is a
homogeneous  degree two polynomial, it has an expression of the form 
\begin{equation}
H(\lambda,\mu)=A\lambda^{2}+B\lambda\mu+C\mu^{2},\label{eq:degree_2}
\end{equation}
where $A,B,C$ are polynomials in the coefficients of the Pl\"ucker
matrices $\mL_{i},\bar{\mL}_{i}$ of the isotropic 
lines\footnote{See Appendix~\ref{sec:isotropic_lines} for explicit
  formulas of $\mL_{i},\bar{\mL}_{i}, \;i=1,2,3$.} of the cameras. 
Their size, of the order of thousands of
terms, is not suitable for algorithmic use. However, a convenient
coordinate change shortens them so that they add up to about one
hundred terms.

Specifically, denoting by $\bq$ and $\bar{\bq}$ the intersection
points of the isotropic lines $l_{1}$ and $\bar{l}_{1}$ with the
plane $\bxi$ and assuming that the points $\bC_{i}$, $\bq$ and
$\bar{\bq}$ are in general position, we consider the coordinate change
$\mH$ performing the mapping 
\begin{equation}
\begin{split}\bC_{1} & \mapsto(0,0,0,1)^{\top}=&\bv_1\\
\bC_{2} & \mapsto(0,0,1,1)^{\top}=&\bv_2\\
\bC_{3} & \mapsto(0,1,-1,1)^{\top}=&\bv_3\\
\bq & \mapsto(1,i,0,0)^{\top}=&\bv_4\\
\bar{\bq} & \mapsto(1,-i,0,0)^{\top}=&\bv_5.
\end{split}
\label{eq:mapsto}
\end{equation}

The matrix of the coordinate change can be computed as 
\begin{displaymath}
  \mH=\begin{pmatrix}
    \beta_1\bv_1 & \beta_2\bv_2 &\beta_3\bv_3 &\beta_1\bv_4 
  \end{pmatrix}
  \begin{pmatrix}
    \alpha_1\bC_1 & \alpha_2\bC_2 & \alpha_3\bC_3 & \alpha_4\bq 
  \end{pmatrix}^{-1},
\end{displaymath}
where 
\begin{displaymath}
  \begin{pmatrix}
    \alpha_1 & \alpha_2 &\alpha_3 &\alpha_4 
  \end{pmatrix}^\top = 
\begin{pmatrix}
    \bC_1 & \bC_2 & \bC_3 & \bq 
  \end{pmatrix}^{-1}\bar\bq,
\end{displaymath}
and 
\begin{displaymath}
  \begin{pmatrix}
    \beta_1 & \beta_2 &\beta_3 &\beta_4 &
  \end{pmatrix}=
\begin{pmatrix}
    \bv_1 & \bv_2 &\bv_3 &\bv_4 
  \end{pmatrix}^{-1}\bv_5.
\end{displaymath}

 In these new coordinates we obtain a polynomial 
\begin{equation}
H_{0}(\lambda,\mu)=A_{0}\lambda^{2}+B_{0}\lambda\mu+C_{0}\mu^{2},\label{eq:H_0}
\end{equation}
 where the coefficients $A_{0},B_{0},C_{0}$ depend only on
 $\mL_{2},\mL_{3}$, since $\mL_1$ is constant in the new
 coordinate system.

In Algorithm~\ref{alg:Algorithm1Parametrization}, we assume that the expressions of $A_{0}$, $B_{0}$
and $C_{0}$ have been precomputed 
(see Appendix~\ref{sec:H0coeffs} for further details).

\begin{algorithm}
\normalsize
\underline{Objective}\newline
Given:

\textbf{1.} a projective calibration of three cameras with
optical centers $\bC_{i}$ and isotropic lines $l_{i},\bar{l}_{i}$
with Pl\"ucker matrices $\mL_{i},\bar{\mL}_{i}$, with generic principal
plane $\bpi_{1}$,

\textbf{2.} a fixed point $\br\not=\bC_{1}$ on the isotropic
line $l_{1}$ and

\textbf{3.} a complex number $z$,

compute the two planes through the points $\bq=\br+z\bC_{1}$
in $l_{1}$
and $\bar{\bq}=\bar{\br}+\bar{z}\bC_{1}$ in $\bar{l}_{1}$ that intersect
isotropic lines $l_{2},\bar{l}_{2},l_{3},\bar{l}_{3}$ in points that,
together with $\bq$ and $\bar{\bq}$, lie on a conic. \tabularnewline
\underline{Algorithm}
\begin{enumerate}
\item With $\bq=\br+z\bC_{1}$ compute the matrix of the coordinate change~\eqref{eq:mapsto}
and apply~\eqref{eq:plucker_change} to transform $\mL_{2}$ and
$\mL_{3}$. 
\item Using precomputed expressions, calculate the coefficients of the
second-degree homogeneous polynomial~\eqref{eq:H_0} and its roots
$(\lambda_{i},\mu_{i})$, $i=1,2$. 
\item Obtain the solution planes in the transformed coordinates as $\bchi_{i}=\lambda_{i}\bpi_{1}+\mu_{i}\bxi$,
with $\bpi_{1}=(0,0,1,0)^{\top}$, $\bxi=(0,0,1,-1)^{\top}$, $i=1,2$
and revert coordinate change for the planes $\bchi_{i}$, thus obtaining
solution planes $\bchi_{1}(z)$, $\bchi_{2}(z)$.
\end{enumerate}
\caption{Compute two planes $\bchi_{1}(z)$, $\bchi_{2}(z)$ 
intersecting the isotropic lines of three square-pixel cameras in points of a conic, where $z$ parameterizes the set of lines in $\bpi_1$.}
\label{alg:Algorithm1Parametrization}
\end{algorithm}

\subsection{Euclidean calibration with five or more cameras}

The parameterization provided by the previous Algorithm~\ref{alg:Algorithm1Parametrization} can be employed,
in particular, to perform a two-dimensional search for the plane at
infinity using the knowledge provided by two or more additional square-pixel
cameras. The next algorithm explores the set of solutions associated
to the first three cameras, aiming at the minimization of a cost function
\begin{equation}
C(z)=\min\{C_{0}(\bchi_{1}(z)),C_{0}(\bchi_{2}(z))\},\label{eq:cost-function}
\end{equation}
 where $C_{0}(\bchi)$ measures the compatibility of candidate plane
$\bchi$ with the square-pixel condition of the additional cameras. 

For a given $\bchi$, the IACs of each additional cameras,
$\iac_{i}(\bchi)$, is calculated as the conic through the projections
of the points of intersection of $l_{i}$ and $\bar{l}_{i}$, $i=1,2,3$,
with $\bchi$.  We recall that the intersection of a line of
  Pl\"ucker matrix $\mL$ with a plane $\bpi$ is given just by the
  vector $\mL\bpi$. After projecting onto the images of each
  camera these intersection points, the IAC can be computed solving
  system (\ref{eq:conic_as_kernel}), which can be done computing the
  singular vector corresponding to the least singular value of the
  matrix of the system. Then the cost $C_{0}(\bchi)$ is computed
from these IACs. In the course of the algorithm, complex solutions may
arise, albeit the actual IAC must be real. Therefore, some additional
constraints are taken into account in the design of $C_{0}(\bchi)$. In
particular, the cost $C_{0}(\bchi)$ is the maximum of the weighted sum
of four non-negative terms:
\begin{equation}
C_{0}(\bchi)=\max_{i=1..N_{c}}\sum_{k=1}^{4}\gamma_{k}C_{k}(\iac_{i}(\bchi)),\label{eq:C0def}
\end{equation}
where the weights $\gamma_{k}\ge0$, $C_{1}(\iac)$ penalizes
complex solutions, $C_{2}(\iac)$ discourages non positive-definite
IACs, $C_{3}(\iac)$ measures the square-pixel condition~(\ref{eq:square-pixel-IAC})
and $C_{4}(\iac)$ penalizes principal points outside the image domain. 

Before computing the individual costs, the IACs $\iac$ undergo two
normalization steps. First, the homogeneous matrix $\iac$ is scaled
by the unit complex number $s$ that maximizes the Frobenius norm
(given by the sum of the squares of the coefficients of the matrix)
of $\Re\{s\text{\ensuremath{\omega}}\}$. This is a constrained optimization
problem whose solution is given by a biquadratic equation in $\Re s$.
Then, $\iac$ is scaled to unit Frobenius norm, $\|\cdot\|_{F}$. 

Next, let us describe each term in~\eqref{eq:C0def}. If $\bu=\ovec(\Re\iac)$
and $\bv=\ovec(\Im\iac)$ are the vectorizations of the upper-triangular
part of the real and imaginary parts of $\iac$, 
\[
C_{1}(\text{\ensuremath{\iac}})=\|\bu\bv^{\top}-\bv\bu^{\top}\|_{F}/(\|\bu\|_{2}^{2}+\|\bv\|_{2}^{2}).
\]

The term $C_{2}$ is motivated by Sylvester's criterion, which states
that a hermitian matrix is positive-definite if and only if all of
its leading principal minors are positive. Let $D_{i}(\mA)$ $i=1,2,3$
be the three leading principal minors of matrix $\mA$ and $g(\mA)=-\sum_{i=1}^{3}\min\{0,D_{i}(\text{\ensuremath{\mA}})\}$,
then 
\[
C_{2}(\iac)=\min\{g(\Re\{\iac\}),g(\Re\{-\iac\})\},
\]
 penalizes the non positive-definiteness of the homogeneous matrix
$\iac$.

The term $C_{3}$ measures the deviation from the square-pixel condition.
We choose:
\[
C_{3}(\iac)=|\tau^2-1|+\cos^2\theta,
\]
 where $\tau=m_{y}/m_{x}=\iac_{11}/\iac_{22}$ and $\cos^2\theta=\iac_{12}^{2}/(\iac_{11}\iac_{22})$.
Observe that $C_{3}(\iac_{i})=0$ for $i=1,2,3$, by construction
of the SLCV, but not necessarily for the additional cameras. 

The term $C_{4}(\iac)$ is the taxicab distance ($\|\cdot\|_{1}$)
from the principal point $(u_{0},v_{0})$ to the boundary of the image
domain if the principal point lies outside the image domain and zero
otherwise. Formulas for $u_{0},v_{0}$ in terms of $\iac$ are well
known in the literature: $u_{0}=\diac_{13}/\diac_{33}$ and $v_{0}=\diac_{23}/\diac_{33}$,
where $\diac\sim\iac^{-1}$ is the adjoint matrix of $\iac$. This term accounts for spurious solutions
with unrealistic location of the principal points of the cameras.

\begin{algorithm}
\normalsize
\underline{Objective}

Given a projective calibration of $N_{c}\geq5$ cameras with square
pixels, projection matrices $\mP_{i}$ and isotropic lines $l_{i},\bar{l}_{i}$,
obtain a Euclidean upgrading.

\underline{Algorithm}
\begin{enumerate}
\item For each $z$ in the set
\begin{align*}
Z = & \{0\}\cup\left\{ \frac{j}{N}e^{i2\pi k/M},j=1,\ldots,N,\, k=1,\ldots,M\right\}\\
& \cup\left\{ \frac{N}{j}e^{-i2\pi k/M},j=1,\ldots,N-1,\, k=1,\ldots,M\right\},
\end{align*}
compute cost $C(z)$ as follows. 

\begin{enumerate}
\item Compute planes $\bchi_{1}(z)$ and $\bchi_{2}(z)$ using Algorithm~\ref{alg:Algorithm1Parametrization}. 
\item For each plane $\bchi$ in $\{\bchi_{1}(z),\bchi_{2}(z)\}$, 

\begin{enumerate}
\item Compute the points of intersection of isotropic lines $l_{i}$, $\bar{l}_{i}$,
$i=1,2,3$ with the plane, project them with projection matrix $\mP_{1}$
and obtain the matrix $\iac_{1}$ of the conic that they define. This
is the IAC of the first camera. 
\item Obtain the IACs $\iac_{i}$ for cameras $i=2,\ldots,N_{c}$ by transferring
$\iac_{1}$ onto them using the plane $\bchi$, $\mP_{1}$ and $\mP_{i}$
(Note that the same result would be obtained if the second or third
camera was employed). Normalize them to (unit Frobenius norm and maximum
real part). 
\item Compute $C_{0}(\bchi)$, according to equation \eqref{eq:C0def}.
\end{enumerate}
\item Compute $C(z)=\min\{C_{0}(\bchi_{1}(z)),C_{0}(\bchi_{2}(z))\}$. 
\end{enumerate}

Take $z_{0}=\arg\min_{z\in Z}C(z)$. 

\item Perform non-linear optimization to obtain a local minimum $z_{1}$
near the value $z_{0}$, repeating steps (a), (b) and (c) above for
each evaluation of the function. Choose, of the two planes attached
to $z_{1}$, the one with minimum cost $C_{0}$. The Euclidean upgrading
is determined by taking this plane as the plane at infinity and the
associated conic as the absolute conic. \end{enumerate}
\caption{Compute a Euclidean upgrading from a projective calibration.}
\label{alg:Algorithm2Upgrading}
\end{algorithm}

Algorithm~\ref{alg:Algorithm2Upgrading} performs a nonlinear optimization of cost
function~(\ref{eq:cost-function}) in two steps. In the first one,
the complex plane is sampled and the sample of minimum cost is
selected.
A sampling of the complex plane that has been empirically found to
be useful consists in splitting the plane into the unit disk and its
complement, and employing a uniform sampling in modulus and phase
for the first, and the inverses of these values for the second. 

In the second optimization step, a search for a local minimum is performed using
as starting point the complex value provided by the first step. The
Nelder-Mead method (downhill simplex method) outperforms other optimization
algorithms with numerical first derivatives such as steepest descent
or conjugate gradient.

The Euclidean upgrading may be computed from the stratified approach
in Algorithm 10.1 of \cite{Hartley-Zisserman}, with the plane at
infinity and the IAC of one of the cameras provided by Algorithm~\ref{alg:Algorithm2Upgrading}.

\section{Experiments}
\label{sec:Experiments}

\subsection{LED bar dataset}
\label{sec:LEDbar}
The proposed technique has been tested in the calibration of a set
of five synchronized square-pixel video cameras with a resolution
of $1280\times960$ pixels using a bright point device. Instead of
a single-point device, a rigid bar with three light-emitting diodes
(LEDs) is employed in the tests. In addition to providing ground truth
(the bar length constancy) to test the quality of the results, this
allows to use for comparison a calibration algorithm based on the
geometry of the captured set of 3D points.

In the experiments, a projective reconstruction of the scene is first
obtained for the five cameras. This is accomplished using Algorithm
10.1 of~\cite{Hartley-Zisserman} (eight-point algorithm) to compute the fundamental
matrix of a pair of cameras and then alternating 12.2 (triangulation)
and 7.1 (resection) of~\cite{Hartley-Zisserman}. The result is then optimized
by projective bundle adjustment~\cite[p.~434]{Hartley-Zisserman}.

Euclidean upgrading by the SLCV is compared to different methods: 
firstly, against two non-iterative techniques and, secondly, against other search-based methods.

Algorithm~\ref{alg:Algorithm2Upgrading} is compared against 
an algorithm based on the dual absolute quadric (DAQ) 
and an algorithm based on the LED bar geometry.
The Euclidean upgrading algorithm based on the DAQ assumes square
pixels and principal point at the center of the image. This provides
four linear constraints on the DAQ for each camera~\cite{Triggs97}.

The algorithm using the LED bars is based on the fact that three aligned
equidistant points determine the point at infinity of the common line~\cite[p.~50]{Hartley-Zisserman}.
Thus each captured position of the LED bar provides a point at infinity
in the projective reconstruction, and, therefore, three positions
of the bar in the projective reconstruction determine the plane at
infinity and, consequently, an affine calibration. Euclidean calibration
is then obtained by computing the affinity that makes the lengths
of the segments corresponding to captured positions of the rigid bar
as close as possible to the true bar length. This requires at least
six captures of the bar and the solution of a least-squares problem
followed by a Cholesky factorization~\cite{tresadern} to determine
the affinity up to a Euclidean motion. Affine and Euclidean calibration
are repeated 500 times using different random sets of three bars for
the affine calibration and all the bars for the Euclidean calibration,
and the reconstruction with smallest bar length variance is selected.

After the Euclidean calibration, a Euclidean bundle adjustment
is performed,~ including the enforcement of the square-pixel shape
and taking into account the lens distortion coefficients (according
to the four-parameter OpenCV model~\cite{opencv_library}).

\begin{figure}[tbp]
\begin{centering}
\includegraphics[width=0.9\columnwidth]{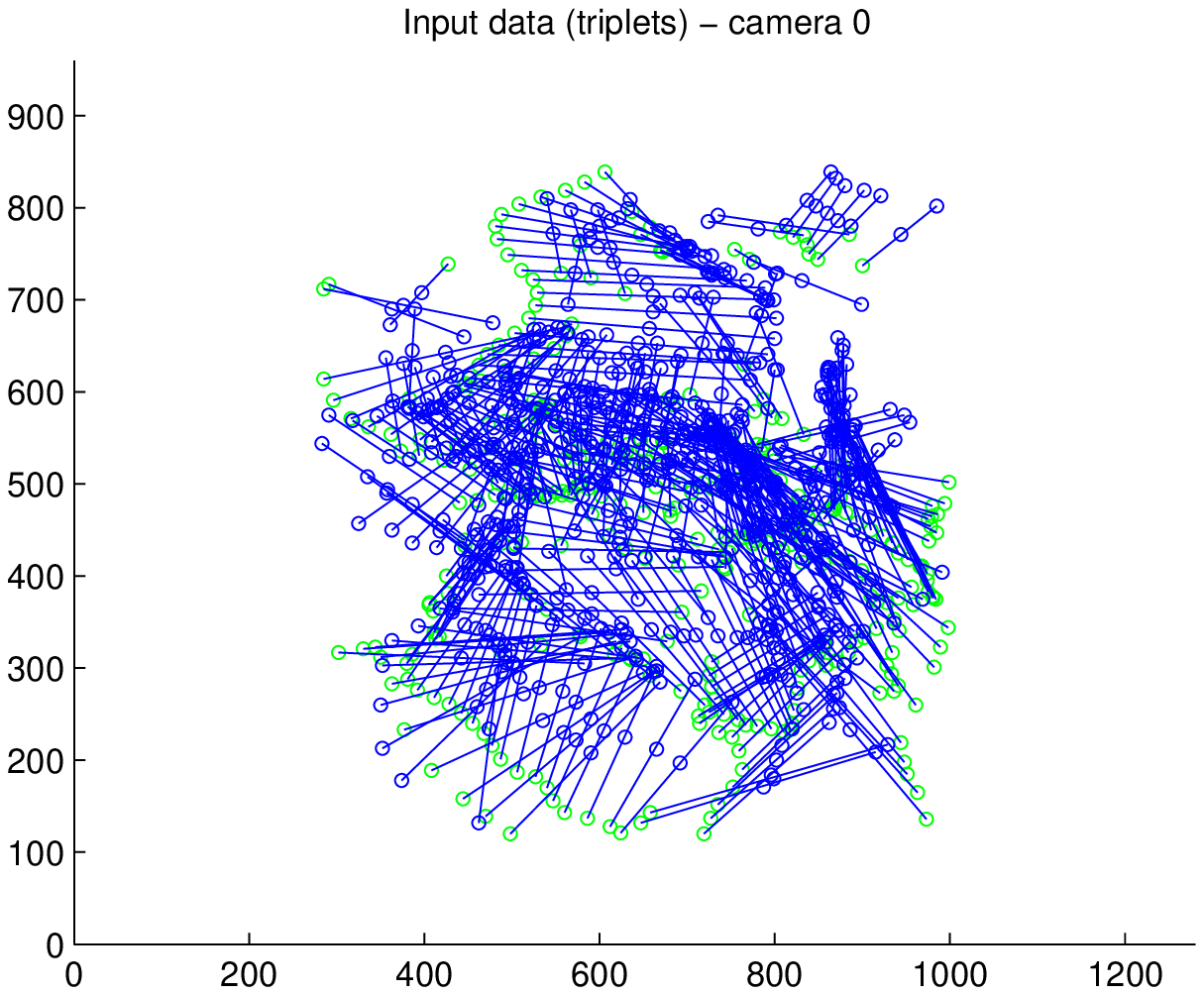}\\
\includegraphics[width=0.9\columnwidth]{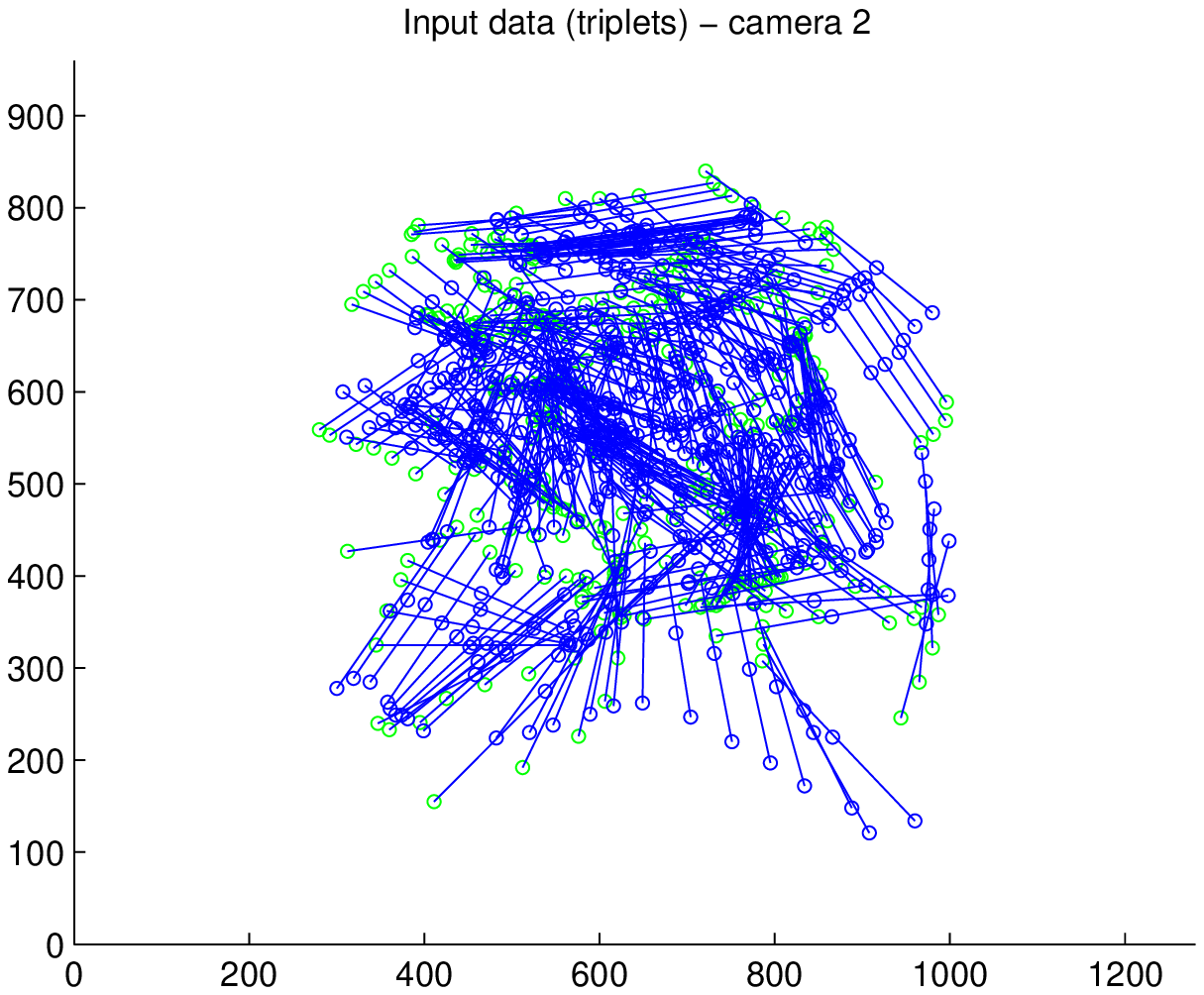}
\caption{Example of input data for the tests (triplets of aligned LEDs
  in a rigid bar). Only the points are used in the calibration with
  the proposed algorithms. The triplet structure is used for
  evaluation purposes.\label{fig:input-data} }
\end{centering}
\end{figure}
The sampling parameters in Algorithm~\ref{alg:Algorithm2Upgrading}
have been $M=N=50$, so that the cost function has been evaluated on
$2\times50\times50=5000$~points.  Figure~\ref{fig:input-data} shows a
sample of the input data for the calibration
process. Figure~\ref{fig:slcv-three-views} shows two views of the
sampled points of the SLCV computed in
Algorithm~\ref{alg:Algorithm2Upgrading}. Figure~\ref{fig:cost-function}
displays the cost function~(\ref{eq:cost-function}) for this
experiment, normalized and pseudo-colored from blue (0) to red (1). A
logarithmic transformation has been applied to enhance the
visualization of small costs. The reconstructed scene obtained with
Algorithm~\ref{alg:Algorithm2Upgrading} is shown in
Fig.~\ref{fig:rec3D}.  Table~\ref{tab:experiment1} shows the
reprojection error and the quotient between the standard deviation and
the average of the segment lengths for each of the Euclidean upgrading
techniques, before and after bundle adjustment.

\begin{figure}[tbp]
\begin{centering}
\includegraphics[width=0.9\columnwidth]{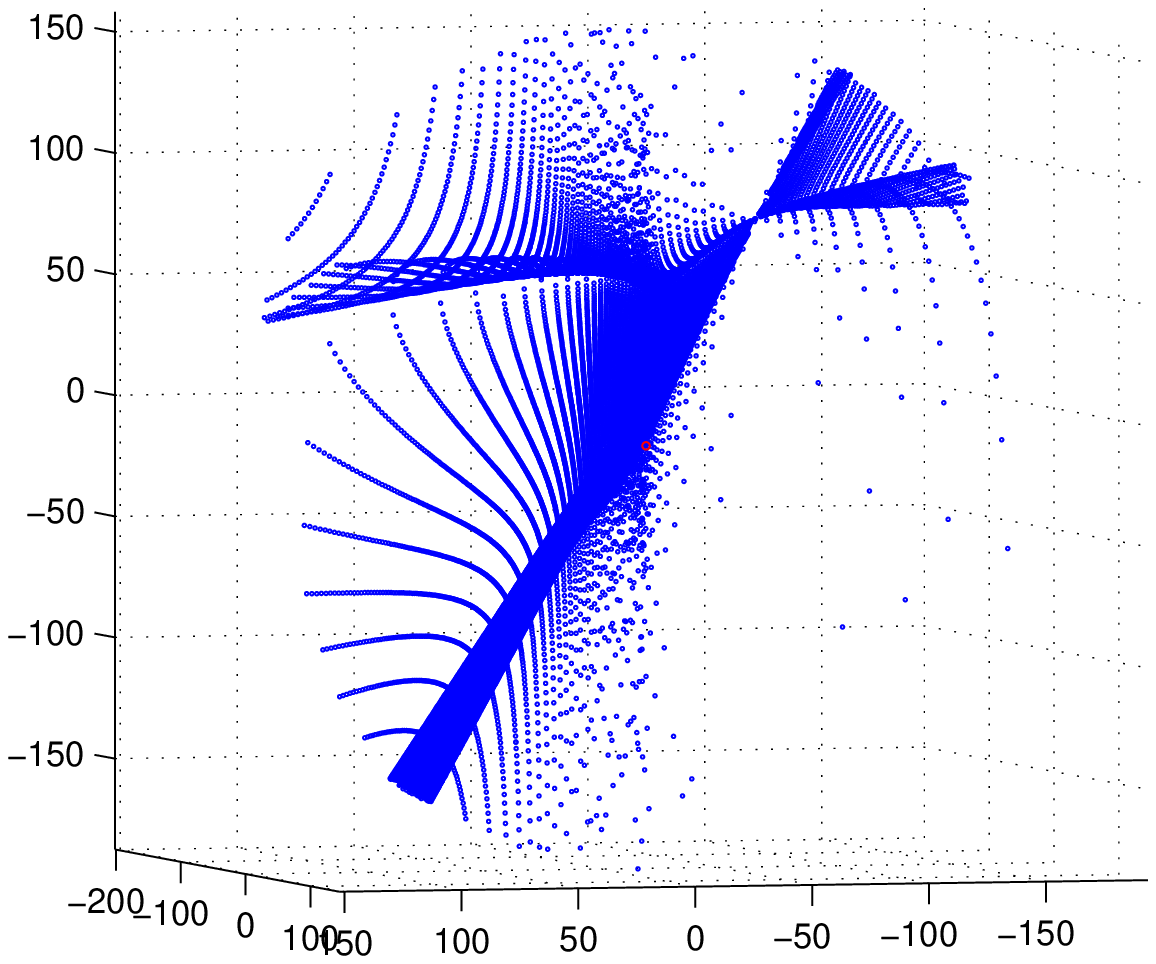}\\
\includegraphics[width=0.9\columnwidth]{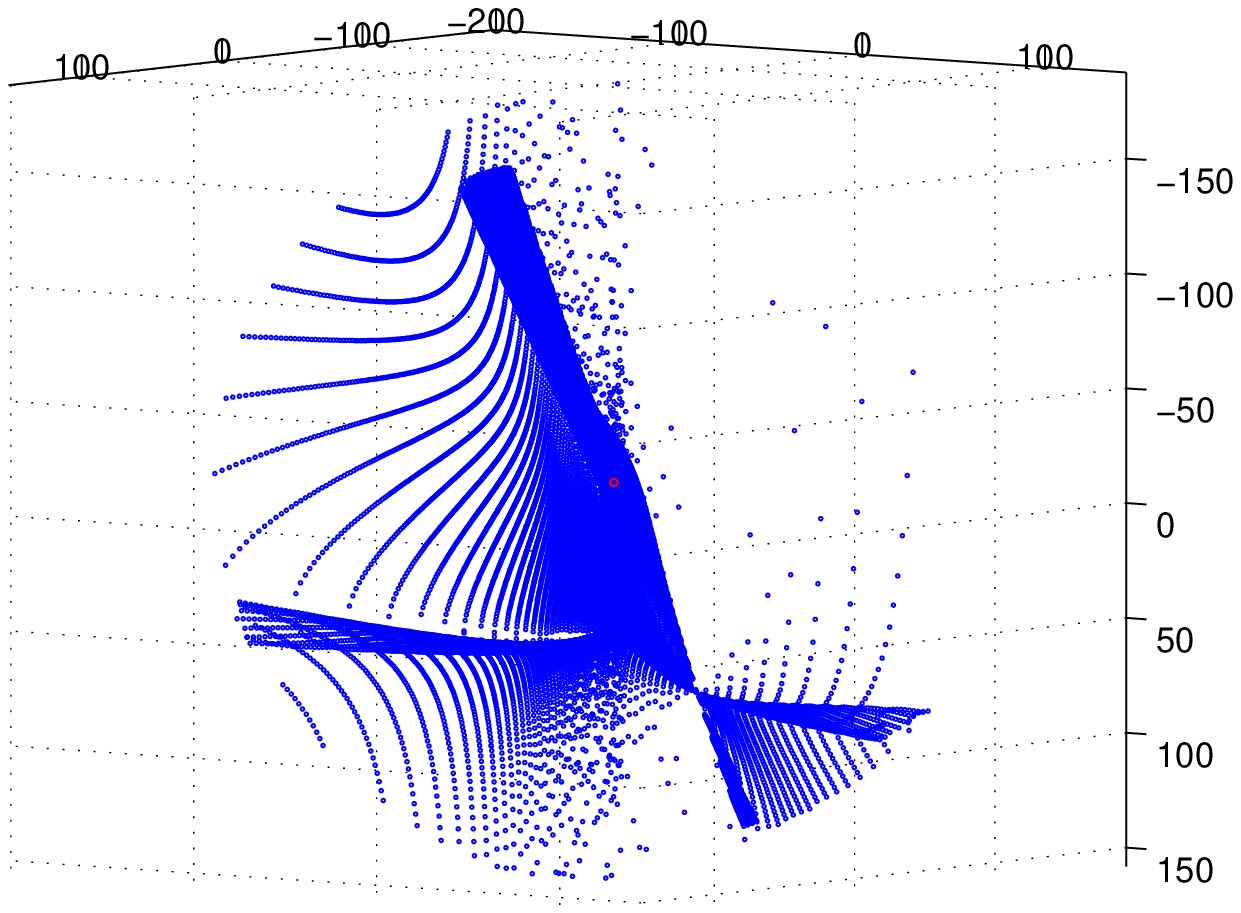} 
\caption{Two views of the sampled points of the SLCV.\label{fig:slcv-three-views} 
}\end{centering}
\end{figure}
\begin{figure}[tbp]
\begin{centering}
\includegraphics[width=0.49\columnwidth]{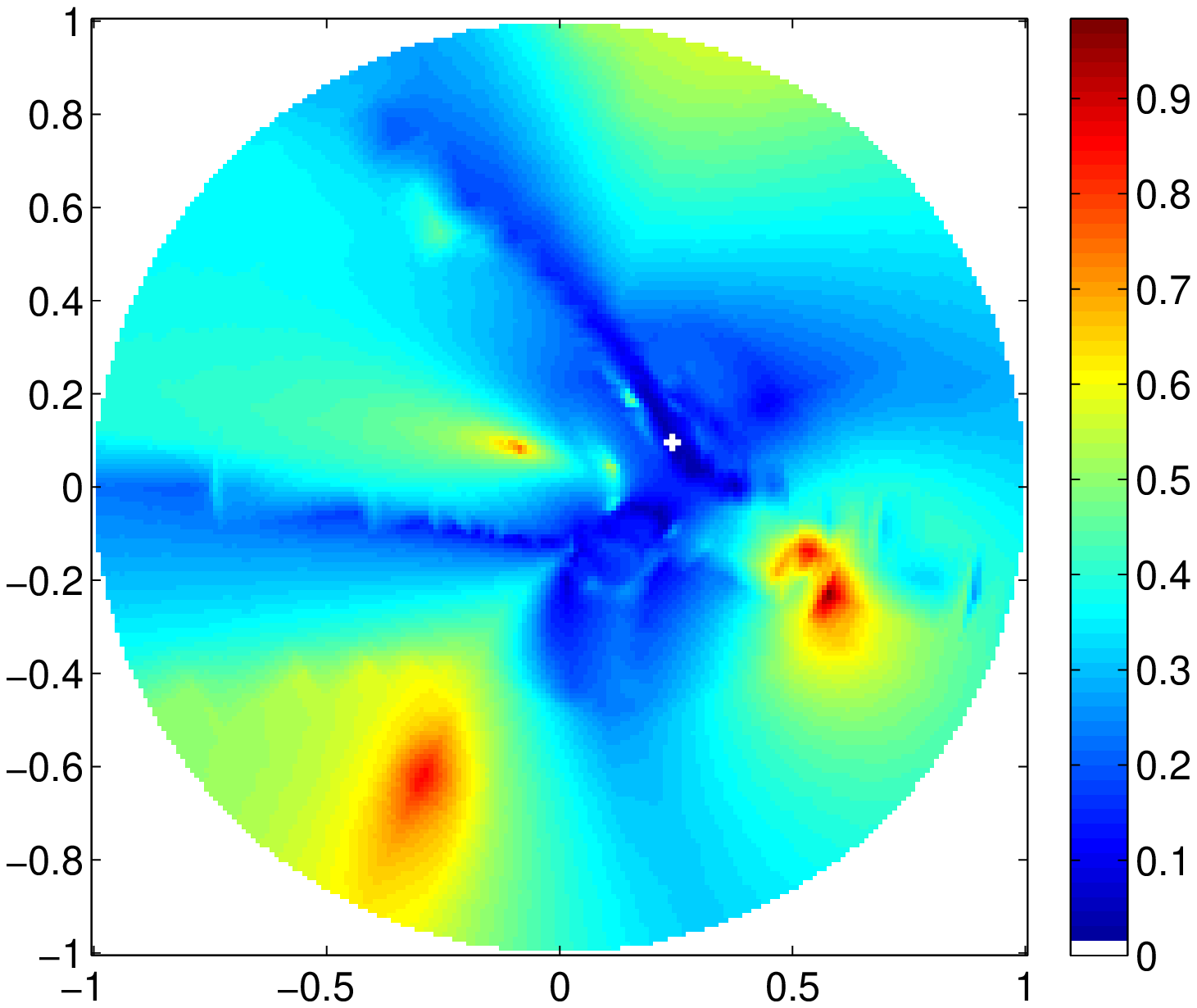}
\includegraphics[width=0.49\columnwidth]{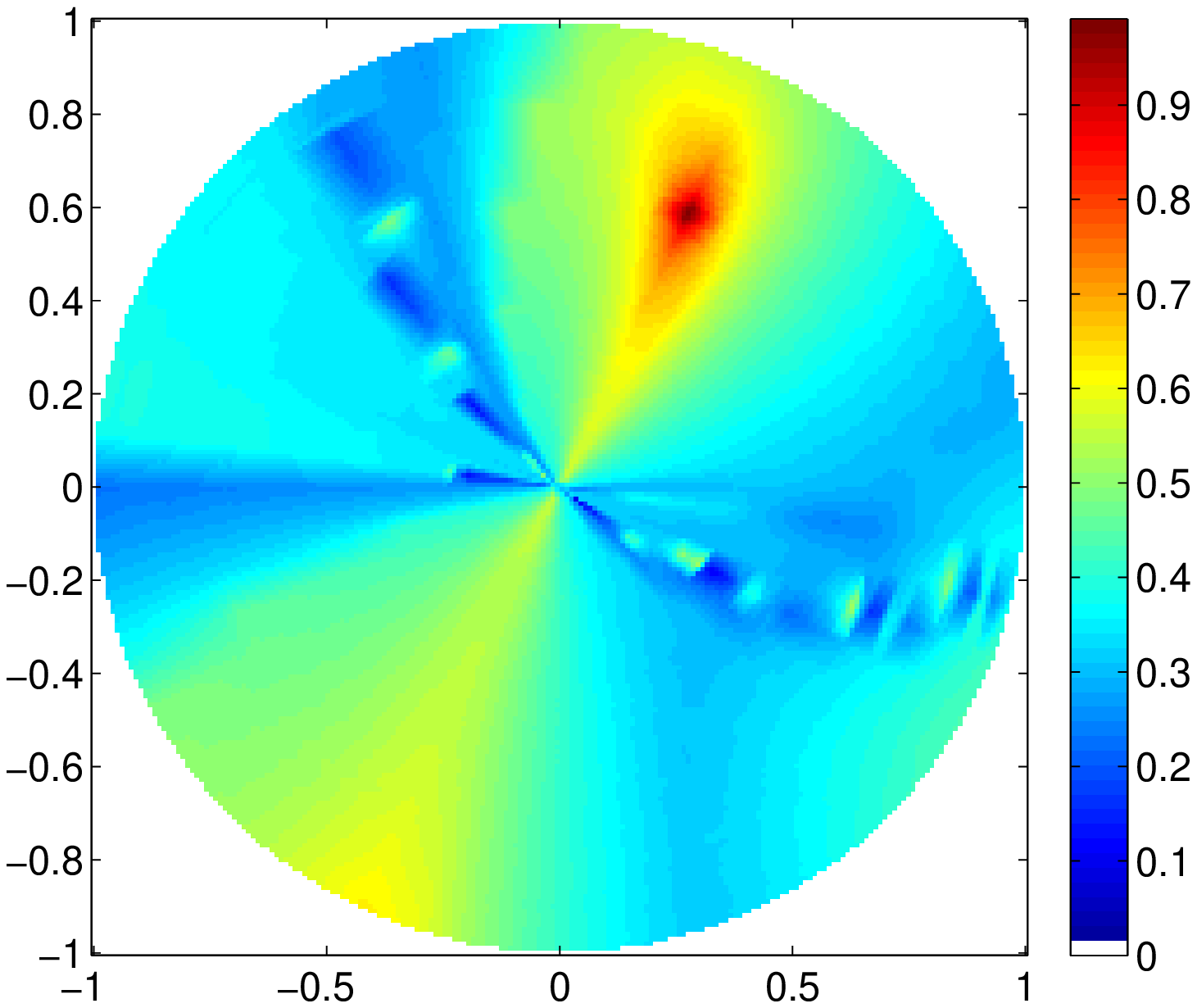}
\caption{Experiment with the LED bar dataset: plots of the sampled cost function.
Left: values at the unit disc of the complex plane, $|z|\leq1$. Right:
values at the complement of the unit disc, at positions $1/\bar{z}$.
The white cross in the left plot marks the location of the minimum.\label{fig:cost-function} }
\end{centering}
\end{figure}
\begin{figure}[htbp]
\begin{centering}
\includegraphics[width=0.9\columnwidth]{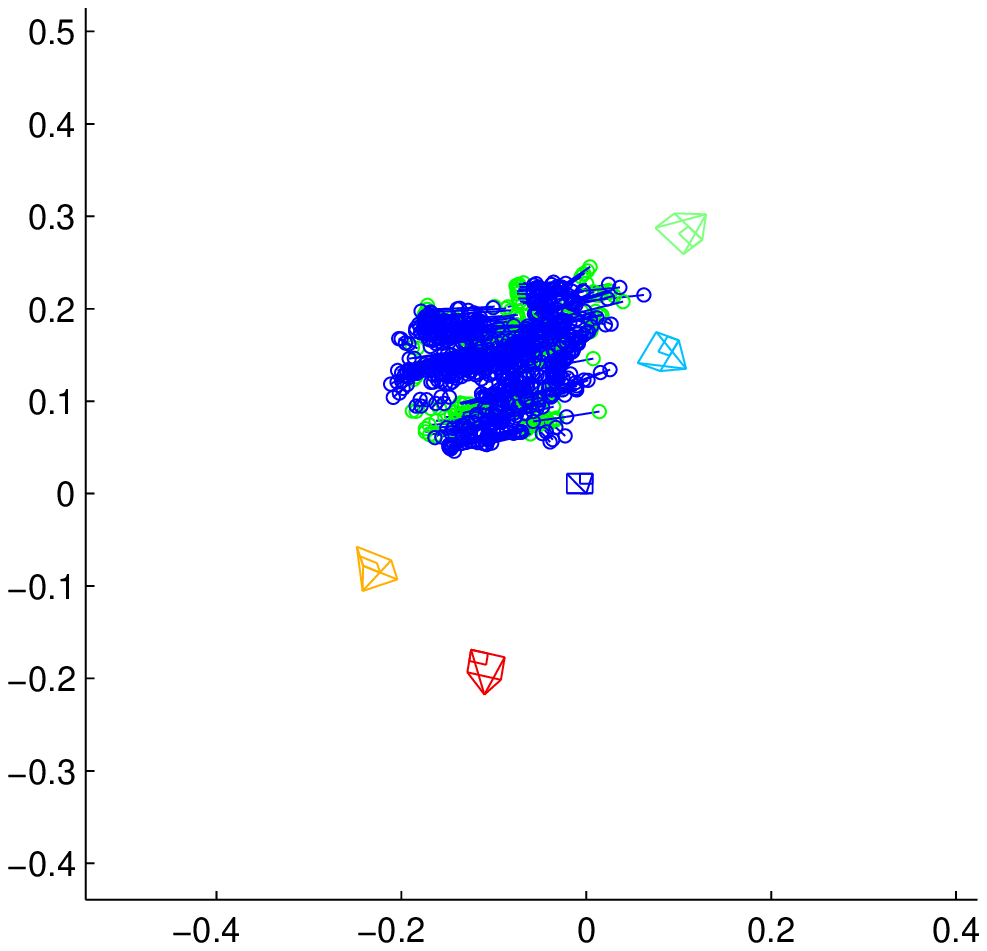}
\includegraphics[width=0.9\columnwidth]{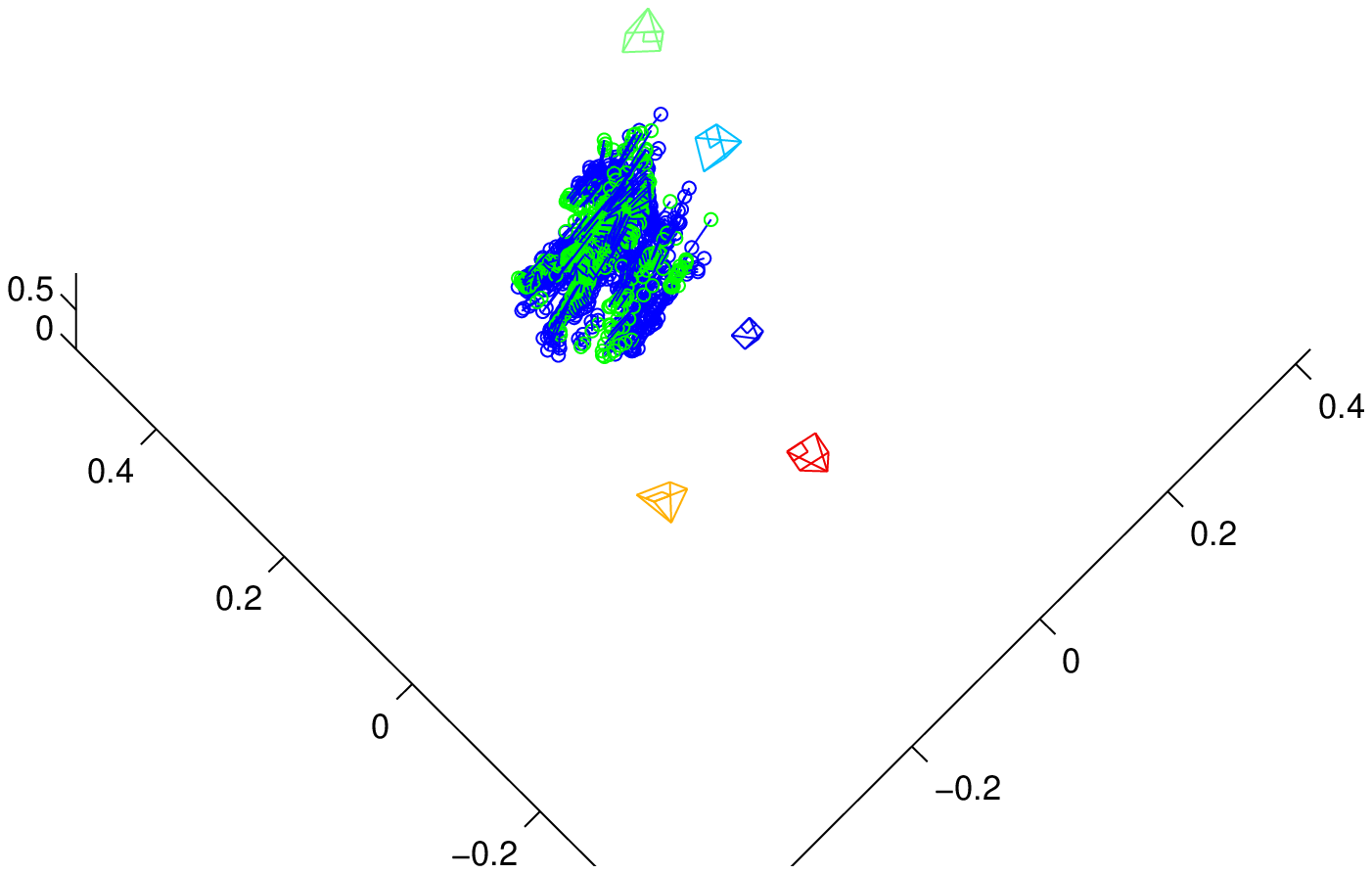}
\caption{Two views of the 3D reconstruction in one of the experiments.\label{fig:rec3D} }
\end{centering}
\end{figure}

\begin{table*}[htbp]
\caption{Results of experiment comparing calibration techniques. The common
previous projective calibration shows a reprojection error of $1.23$
pixels. $\sigma$ and $\mu$ are, respectively, the typical deviation
and the average of LED bars lengths. SLCV sampling parameters: $N = M = 50$.\label{tab:experiment1}}
\begin{centering}
\begin{tabular}{lcccccccc}
\toprule
\multirow{2}{*}{} & \multicolumn{2}{c}{DAQ} &\phantom{a} & \multicolumn{2}{c}{\textbf{SLCV}} &\phantom{a} & \multicolumn{2}{c}{Triplets}\\
\cmidrule{2-3} \cmidrule{5-6} \cmidrule{8-9}
 & Euc. calib.  & Euc. BA  && Euc. calib.  & Euc. BA  && Euc. calib.  & Euc. BA \\
\midrule
Rep. error  & $15.04$  & $1.74$  && $1.71$  & $0.64$  && $13.56$  & $0.63$ \tabularnewline
$\sigma/\mu$  & 0.15  & 0.029  && 0.049  & 0.0098  && 0.026  & 0.0048 \tabularnewline
\bottomrule
\end{tabular}
\par\end{centering}
\end{table*}

A second set of comparisons has been performed with
two algorithms based on 3D search in the set of planes of
space. The first of these algorithms is the one given in~\cite{Hartley99}
tailored to the case of square-pixel cameras and varying parameters. 
The second one makes use of a cost function based
on the transfer of cyclic points using the candidate 
plane at infinity and is described in Algorithm~\ref{alg:TransferFiveCircPoints}.

\begin{algorithm}
\normalsize
\underline{Objective}

Given the projective reconstruction of $N_{c}\geq 5$ cameras $\{\mP^{k}\}_{k=1}^{N_{c}}$ 
and the coordinates $\bpi$ of a candidate plane at infinity, compute a cost of the fitting of the plane to the square-pixel constraints.

\underline{Algorithm}

\begin{enumerate}
\item Transfer five selected imaged cyclic points to all images.
These points are the projections of the points where $\bpi$ meets the isotropic lines (Appendix~\ref{sec:isotropic_lines}) $\mL_{j},j=1,\ldots,5$ of some selected cameras, 
i.e., $\bx^{k}_j=\mP^{k}\mL_j\bpi$.

\item Compute the candidate IAC $\iac_{k}$ of each camera by fitting a conic to the set of five transferred points. 
This requires solving a system like~\eqref{eq:conic_as_kernel}, $\bx^{k\top}_j\iac_{k}\bx^{k}_j=0$, whose solution $\iac_{k}$ is the singular vector corresponding to the least singular value of the matrix of the system. 

\item Measure the how far the obtained candidate IACs are from the square-pixel
hypothesis. The cost is, based on~\eqref{eq:square-pixel-IAC}, $c(\pi)=\sum_{k=1}^{N_c}\left(|\mathbf{I}^{\top}\iac_{k}\mathbf{I}| +|\bar{\mathbf{I}}^{\top}\iac_{k}\bar{\mathbf{I}}|\right)$, where $\mathbf{I}=(1,i,0)^{\top}$ and $\bar{\mathbf{I}}=(1,-i,0)^{\top}$.
\end{enumerate}

\caption{\;Compute the cost of a candidate plane at infinity 
by transferring five cyclic points to all image planes and estimating 
the corresponding IACs of the cameras.}
\label{alg:TransferFiveCircPoints}
\end{algorithm}

The compared methods search for the plane at infinity, but while~\cite{Hartley99}
and Algorithm~\ref{alg:TransferFiveCircPoints} use a direct parameterization of
$\bpinf$ by its three coordinates in a quasi-affine reconstruction,
the SLCV method reduces the dimension of the search space from three
to two by searching over the surface (i.e., two-dimensional variety)
of candidate planes at infinity in a general projective reconstruction. 
%

It must first be pointed out that it has not been possible to obtain
valid results with any of the 3D-search algorithms without previous obtainment
of a quasi-affine reconstruction~\cite{Hartley-Zisserman,Hartley98},
an unnecessary step in the case of the presented SLCV-based
algorithm. Table~\ref{tab:Results-of-search-methods} compares the
performance of the SLCV algorithm (without quasi-affine upgrading)
with those of the two 3D search algorithms (with quasi-affine
upgrading), using the LED bar data set. The three algorithms provide
similar results for similar total number of points.

As for the computational cost of the three compared algorithms, 
the cost in the case of each iteration of the SLCV algorithm 
is the sum of the cost of the plane generation and the plane evaluation, 
both of which are comparable to the cost of plane evaluation in the two other algorithms.

\begin{table}
\caption{\label{tab:Results-of-search-methods}Experiment with five cameras 
viewing a moving LED bar. Results of search-based autocalibration
algorithms with \emph{a posteriori} square-pixel enforcement.
$\sigma$ and $\mu$ are, respectively, the typical deviation and the average
of LED bars lengths. $N$, $M$ have been chosen so that the number of samples is $2NM \approx N_{s}$.}
\begin{centering}
\begin{tabular}{lccc}
\toprule
\# samples & $N_{s}=10^{3}$ & $N_{s}=10^{3}$ & $N=M=22$\tabularnewline
Rep. error & 10.34 & 2.57 & 1.97\tabularnewline
$\sigma/\mu$ & 0.051 & 0.050 & 0.055\tabularnewline
\midrule
\# samples & $N_{s}=20^{3}$ & $N_{s}=20^{3}$ & $N=M=63$\tabularnewline
Rep. error & 1.63 & 1.21 & 1.06\tabularnewline
$\sigma/\mu$ & 0.032 & 0.032 & 0.035\tabularnewline
\bottomrule
\end{tabular}
\par\end{centering}
\end{table}

\subsection{Checkerboard dataset}
\label{sec:CheckerboardScene}

The developed method has also been tested on a different set of 5
to 10 images that contain checkerboard patterns~\cite{ronda08}. 
This calibration rig provides an additional validation of the results.
The images, of size $1280\times960$ pixels, were acquired with a Sony
 DSC-F828 digital camera. To test varying parameters, the equivalent
focal length (in a 35 mm film) of the camera was set to 50 mm in eight
of the images and to 100 mm in the remaining two. Variations due to
auto-focus were not controlled. For this range of focal lengths, the
lens distortion (radial and tangential) can be neglected, so it is
assumed to be zero.

For each of the following experiments, scale-invariant key points (SIFT~\cite{Lowe99}) 
are detected and matched across images, obtained using~\cite{SnavelySS08}.
Then, a projective reconstruction of the scene is obtained, as already explained.
Normalization of coordinates (``preconditioning'') is applied, as it
is essential
to improve the numerical conditioning of the equations in the different
estimation problems involved (fundamental matrix, triangulation, resection
and bundle adjustment).  Table~\ref{tab:CheckerboardProjective}
summarizes the parameters of the projective reconstructions. The resulting
set of projection matrices are the input to the autocalibration algorithms.

\begin{table}[h]
\caption{Projective reconstructions.}
\label{tab:CheckerboardProjective}
\begin{centering}
\begin{tabular}{lcccc}
\toprule
Scene & \multicolumn{2}{c}{Checkerboard} & \multicolumn{2}{c}{Plaza de la Villa}\\
\midrule
\# images & 10 & 5 & 16 & 5\tabularnewline
\# \mbox{3-D} points & 1494 & 1199 & 8555 & 1083\tabularnewline
\# image points & 6660 & 4472 & 35181 & 3483\tabularnewline
Rep. error (BA) & 0.22268 & 0.17678 & 0.16053 & 0.17855\tabularnewline
\bottomrule
\end{tabular}
\par\end{centering}
\end{table}

\begin{table*}[htbp]
\caption{Reprojection error and intrinsic parameter comparison for the experiment
with 10 images of the Checkerboard scene. For each statistic, the
top row corresponds to the value for cameras with $f=50$ mm (equivalent
in 35 mm film) and the bottom row corresponds to cameras with $f=100$
mm. Data are given in pixels.}
\label{tab:CheckerboardIntrinsics10imgs}
\begin{centering}
\begin{tabular}{lcccc}
\toprule 
Method & AQC & DAQ & \textbf{SLCV} & Euc. BA\\
\midrule
Mean focal length $\alpha$ & 1822.9 & 1851.4 & 1806.2 & 1847.8\\
 & 3433.2 & 3579.3 & 3399.4 & 3555.7\\[0.5ex]
$\alpha$ standard deviation & 10.95 & 4.86 & 16.1 & 7.77\\
 & 36.15 & 13.33 & 29.21 & 4.53\\[0.5ex]
Mean pp $(u_{0,}v_{0})$ & (623.1, 463.9) & (638.1, 477.8) & (607.4, 468.9) & (620.4, 485.3)\\
 & (618.6, 446.8) & (654.5, 505.7) & (605.5, 455.4) & (588.0, 525.2)\\[0.5ex]
p.p. std. dev. & (16.85, 8.13) & (1.61, 4.96) & (27.7, 11.15) & (6.72, 8.71)\\
 & (34.44, 0.55) & (17.27, 10.91) & (53.18, 7.47) & (24.79, 6.36)\\[0.5ex]
 Rep. error & 0.65765 & 0.27181 & 0.27497 & 0.22386\\
\bottomrule
\end{tabular}
\par\end{centering}
\end{table*}

Table \ref{tab:CheckerboardIntrinsics10imgs} shows the results for the
experiment with \emph{10 images}. In this case, it is possible to
compare the SLCV and DAQ algorithms to a technique based on the AQC
\cite{Valdes04}.
A Euclidean bundle adjustment with enforcement of the square-pixel condition is performed after the
Euclidean calibration with each of the four compared techniques. The
solution obtained (regardless of the AQC, DAQ or SLCV initialization),
is given in the last column of Table
\ref{tab:CheckerboardIntrinsics10imgs}.

If the image resolution and the size of the CCD of the camera are
known, it is possible to convert the focal length from mm to pixels.
For this dataset, an equivalent focal length of $f=50$ mm translates
to $\alpha=1850$ pixels, which is very close to the values obtained
by the different algorithms tested (first row of Table \ref{tab:CheckerboardIntrinsics10imgs}). 

Because the DAQ algorithm yields good estimates of the intrinsic parameters
(with small dispersion and reprojection error), we may conclude that
the principal point (p.p.) of the camera is close to the center of
the image. This observation is also supported by the estimates of
the p.p. due to the other algorithms. All three autocalibration methods
show a strong agreement with the three-homography calibration algorithm
in \cite{ronda08} and \cite[p.~211]{Hartley-Zisserman}.

\begin{figure}
\begin{centering}
\includegraphics[width=0.49\columnwidth]{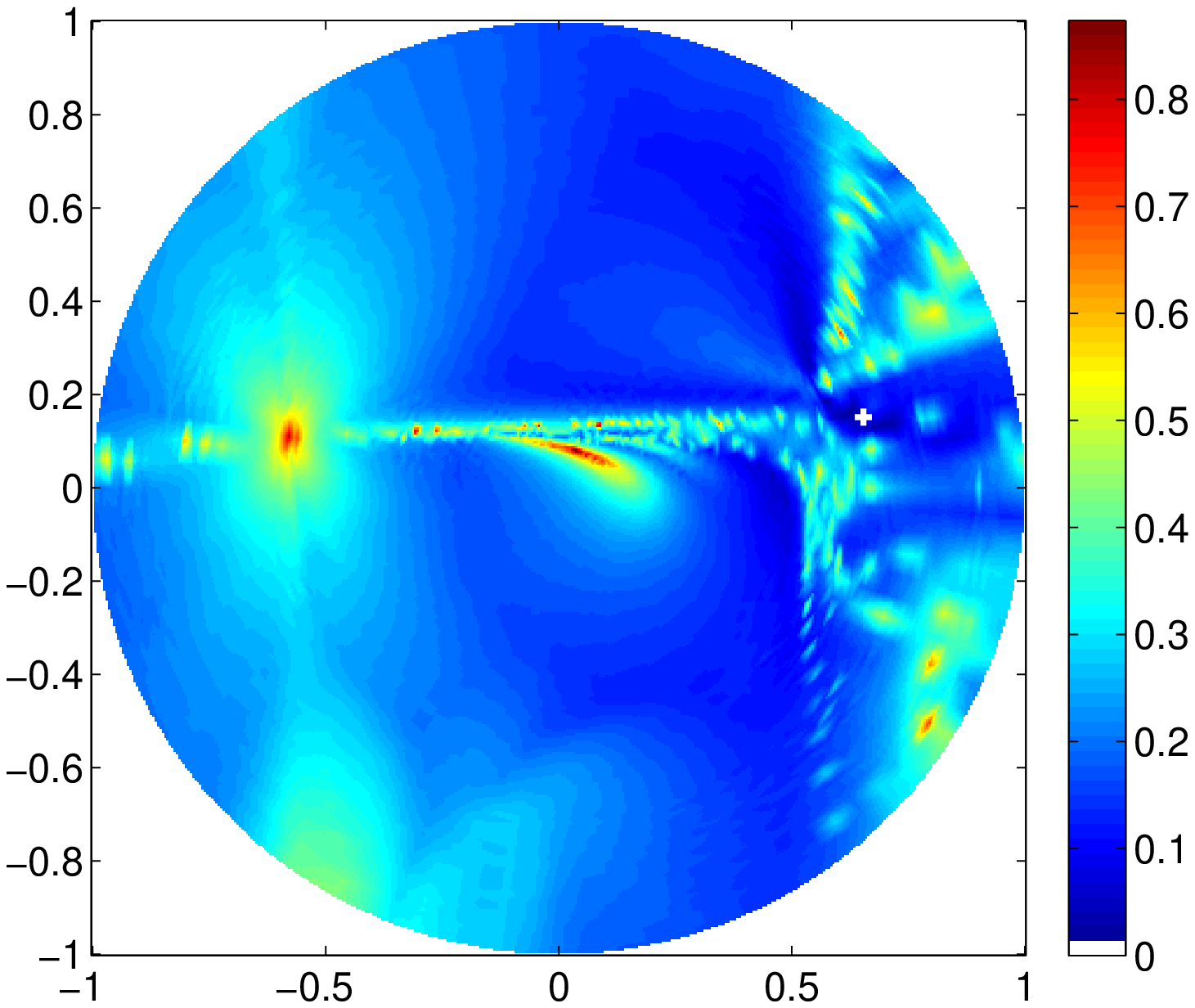}
\includegraphics[width=0.49\columnwidth]{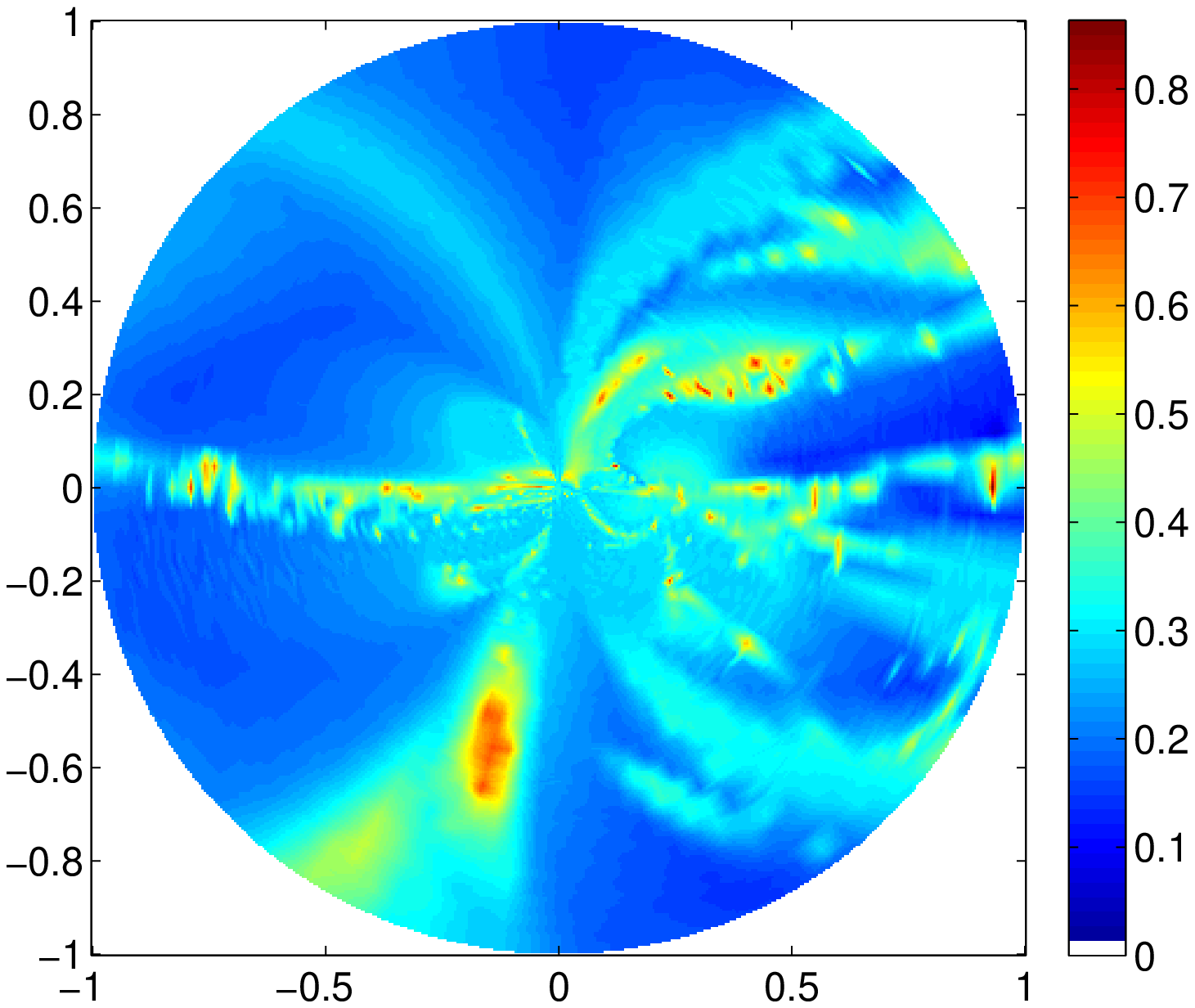}
\caption{Experiment with 10 images of the Checkerboard dataset: plots of the
sampled cost function. Left: values at the unit disc of the complex
plane, $|z|\leq1$. Right: values at the complement of the unit disc,
at positions $1/\bar{z}$. The white cross in the left plot marks
the location of the minimum.}
\label{fig:CheckerboardDiscs10imgs}
\end{centering}
\end{figure}

\begin{figure}
\begin{centering}
\includegraphics[width=\columnwidth]{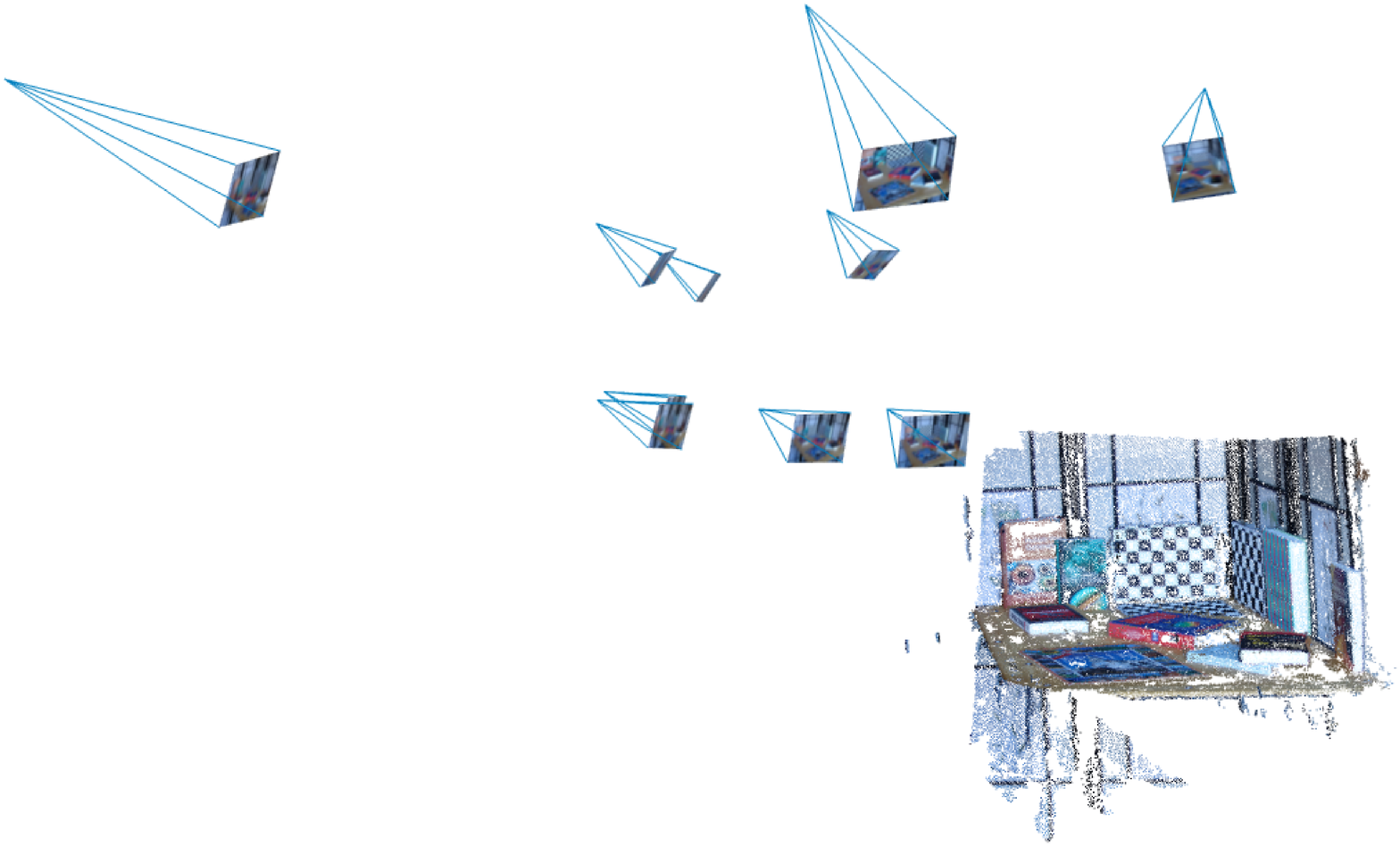}\medskip{}
\includegraphics[width=\columnwidth]{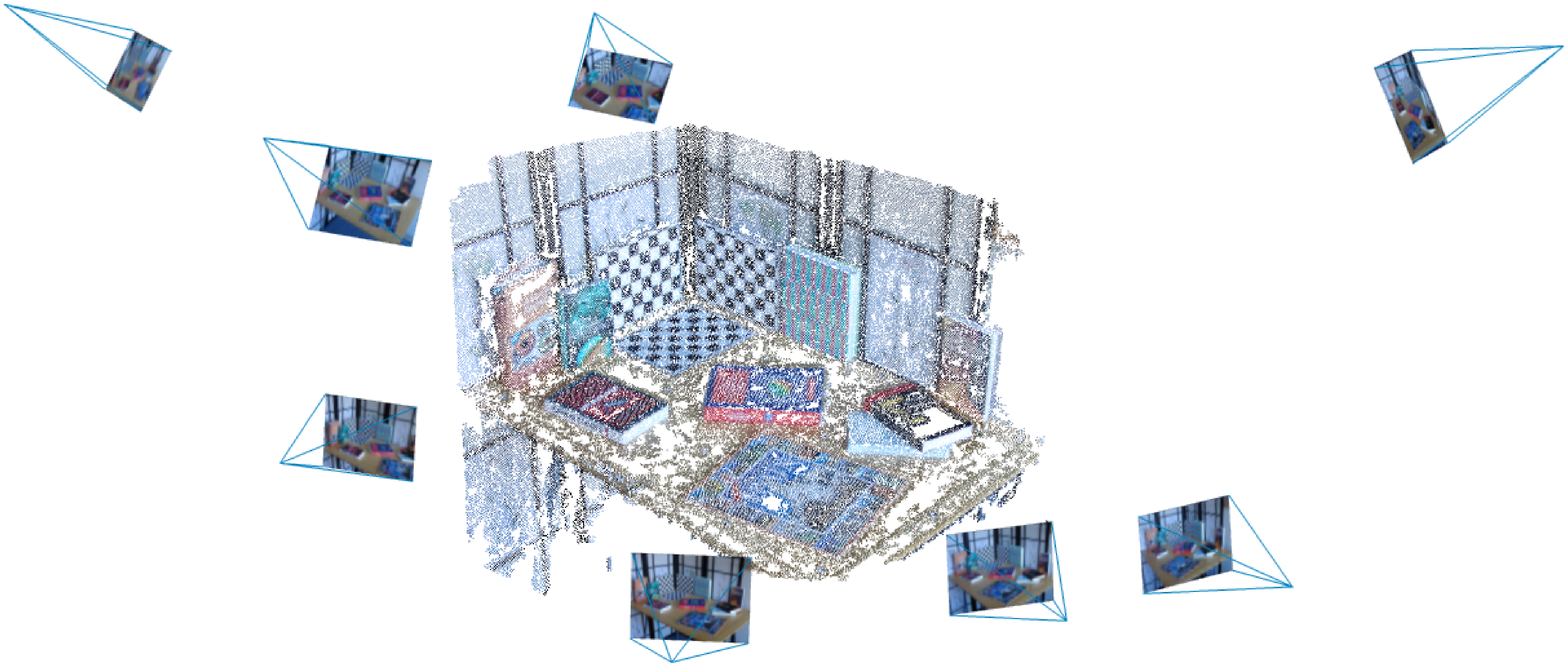}
\caption{Experiment with 10 images of the Checkerboard dataset: reconstructed
3D scene.}
\label{fig:CheckerboardScene10imgs}
\end{centering}
\end{figure}

Euclidean calibration by means of the developed technique (SLCV) provides
competitive results with respect to the other methods (DAQ or AQC), 
but with a slightly bigger dispersion around the mean values.
In normalized coordinates, the weights used in \eqref{eq:C0def} to
measure the goodness of fit of the IACs in Algorithm~\ref{alg:Algorithm2Upgrading} are $\gamma_{1}=\gamma_{2}=\gamma_{3}=\gamma_{4}=1$.
The sampling parameters in Algorithm~\ref{alg:Algorithm2Upgrading} are $M=N=100$. Figure~\ref{fig:CheckerboardDiscs10imgs}
shows the sampled cost function. 
Figure~\ref{fig:CheckerboardScene10imgs} shows two views of the densely reconstructed scene corresponding
to the Euclidean calibration by means of the SLCV algorithm. The dense
reconstruction was obtained by feeding the images and the Euclidean
calibration of the cameras to the Patch-based Multi-view Stereo Software (PMVS)~\cite{FurukawaP07}.

\begin{figure*}
\begin{centering}
\includegraphics[width=\linewidth]{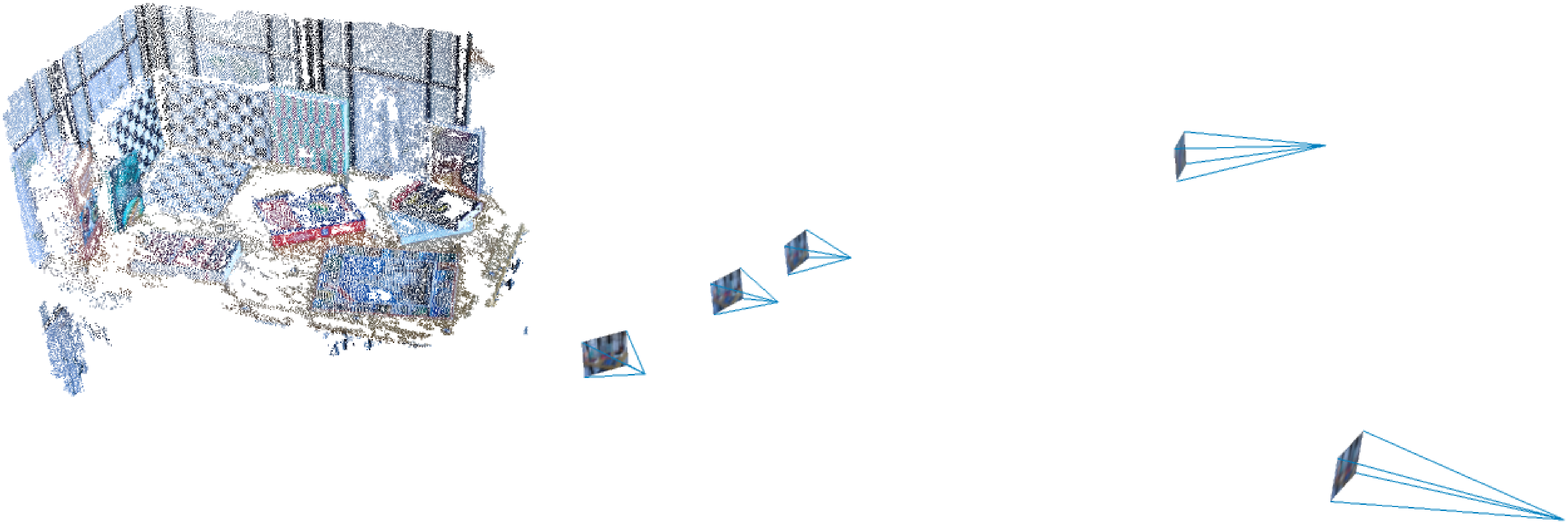}
\caption{Experiment with 5 images of the Checkerboard dataset: reconstructed 3D scene.}
\label{fig:CheckerboardScene05imgs}
\end{centering}
\end{figure*}

\begin{table*}
\caption{Reprojection error and intrinsic parameter comparison for experiment
with 5 images of the Checkerboard dataset. Same notation as in Table
\ref{tab:CheckerboardIntrinsics10imgs}. Both DAQ and SLCV methods yield good results.}
\label{tab:CheckerboardIntrinsics5imgs}
\begin{centering}
\begin{tabular}{lccc}
\toprule
Method & DAQ & \textbf{SLCV} & Euc. BA \tabularnewline
\midrule
Mean focal length $\alpha$ & 1833.3 & 1855.2 & 1883.1 \\
 & 3511.2 & 3602.1 & 3662.9\\[0.5ex]
$\alpha$ standard deviation & 0.46 & 3.84 & 4.78\\
 & 16.24 & 7.90 & 26.15\\[0.5ex]
Mean p.p. $(u_{0,}v_{0})$  & (637.4, 478.9) & (624.5, 477.2) & (620.0, 500.2) \\
 & (664.2, 493.3) & (618.0, 492.4) & (588.8, 569.5) \\[0.5ex]
p.p. std. dev.  & (0.72, 2.12) & (1.41, 1.48) & (8.72, 0.68) \\
 & (28.61, 15.91) & (31.53, 20.03) & (12.3, 15.4) \\[0.5ex]
Rep. error & 0.26204 & 0.20134 & 0.17689\\
\bottomrule
\end{tabular}
\par\end{centering}
\end{table*}

\begin{table*}
\caption{Reprojection error and intrinsic parameter comparison for experiment
with 5 expanded images of the Checkerboard dataset. Same notation as in Table
\ref{tab:CheckerboardIntrinsics10imgs}. 
The SLCV method clearly outperforms the DAQ method if the principal point is not near the image center.}
\label{tab:CheckerboardIntrinsics5imgsExpanded}
\begin{centering}
\begin{tabular}{lccc}
\toprule
Method & DAQ & \textbf{SLCV} & Euc. BA \\
\midrule
Mean focal length $\alpha$ & 3143.7 & 1827.2 & 1877.6 \\
 & 25499.9 & 3516.0 & 3643.56\\[0.5ex]
$\alpha$ standard deviation & 29.28 & 3.87 & 5.37\\
 & 20080.2 & 5.30 & 22.69\\[0.5ex]
Mean p.p. $(u_{0,}v_{0})$  & (801.3, 516.4) & (622.6, 478.5) & (617.5, 500.6) \\
 & (12192.1, -3639.6) & (626.0, 493.9) & (584.0, 571.8) \\[0.5ex]
p.p. std. dev.  & (261.1, 44.54) & (6.49, 0.92) & (7.54, 0.54) \\
 & (23167.2, 9700.0) & (43.46, 7.4) & (15.97, 9.37) \\[0.5ex]
Rep. error & 3324.95 & 0.20725 & 0.17707\\
\bottomrule
\end{tabular}
\par\end{centering}
\end{table*}

Next, an experiment with a subset of \emph{5 images} is carried out.
To account for varying parameters, three of the images correspond
to a focal length $f=50$ mm and the remaining two have $f=100$ mm.
Table~\ref{tab:CheckerboardProjective} shows the parameters of the
projective reconstruction. Table~\ref{tab:CheckerboardIntrinsics5imgs}
compares the Euclidean upgrading given by the DAQ and the SLCV algorithms. 
Both initializations converge to the same solution after 
Euclidean bundle adjustment (last column of Table~\ref{tab:CheckerboardIntrinsics5imgs}).
The SLCV method yields similar results to those of the DAQ method but requiring fewer
equations per camera. The reconstructed scene corresponding to the
Euclidean calibration by means of the SLCV algorithm is shown in Fig.~\ref{fig:CheckerboardScene05imgs}. 
The SLCV method provides a sensible initialization to Euclidean bundle adjustment. 

Next, we demonstrate the good performance of the developed method
in case of decentered principal point. To do so, the previous 5 images of
the Checkerboard dataset are extended to $1600\times1200$ pixels from
the upper-left corner. The principal point is, as seen in Table~\ref{tab:CheckerboardIntrinsics10imgs},
near the point with coordinates $(640,480)$ pixels, which significantly
differs from the new image center at $(800,600)$ pixels.
A projective reconstruction of the scene is obtained, with 1197 \mbox{3-D} points, 4463 image projections and a
reprojection error of 0.17699 pixels.
Table~\ref{tab:CheckerboardIntrinsics5imgsExpanded} compares the
Euclidean upgrading by the two autocalibration methods in Table~\ref{tab:CheckerboardIntrinsics5imgs}.
Because the hypothesis of known principal point (e.g. at the image center) is
not satisfied, the DAQ method performs poorly. 
The SLCV method, however, yields the similar good results as in Table~\ref{tab:CheckerboardIntrinsics5imgs}
because it solely relies on the square-pixel constraint. 
The last column of Table \ref{tab:CheckerboardIntrinsics5imgsExpanded} shows the
result after refining the SLCV Euclidean calibration by bundle adjustment.

\subsection{Outdoor dataset}
\label{sec:OutdoorScene}

As an additional experiment, 16 images of the Plaza de la Villa in Madrid 
(see Fig. \ref{fig:PlazaDeLaVillaImages}) were acquired with an Olympus E-620 digital camera
at a resolution of $1280\times960$ pixels. 
The focal length was set to $f=50$ mm ($\alpha=1850$ pixels) in half of the images and
to $f=70$ mm ($\alpha=2590$ pixels) in the other half. Reconstructions
were carried out with all images and with a subset of 5 images (with
different focal lengths). The parameters of the projective reconstructions
are summarized in Table \ref{tab:CheckerboardProjective}. Figures
\ref{fig:PlazaDeLaVillaScene16imgs} and \ref{fig:PlazaDeLaVillaScene05imgs}
show reconstructions of the scene with 16 and 5 images, respectively,
obtained by Euclidean upgrading with the SLCV method. 

\begin{figure}
\begin{centering}
\includegraphics[width=0.49\columnwidth]{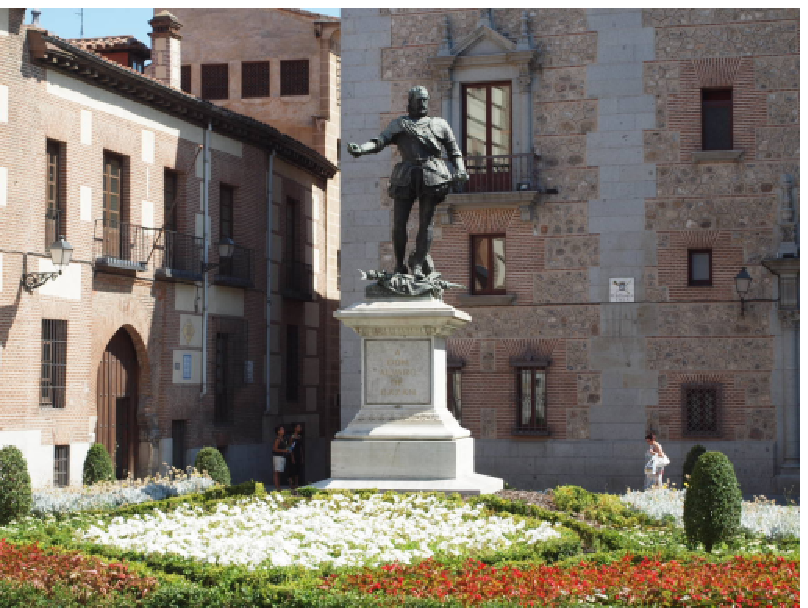}
\includegraphics[width=0.49\columnwidth]{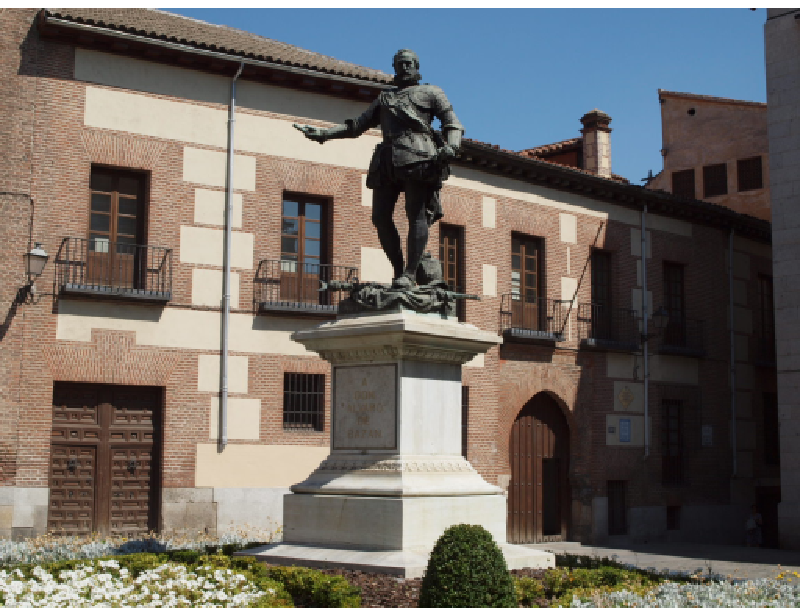}
\caption{Two images of the outdoor (\emph{Plaza de la Villa}) scene.}
\label{fig:PlazaDeLaVillaImages}
\end{centering}
\end{figure}

\begin{figure}
\begin{centering}
\includegraphics[width=\columnwidth]{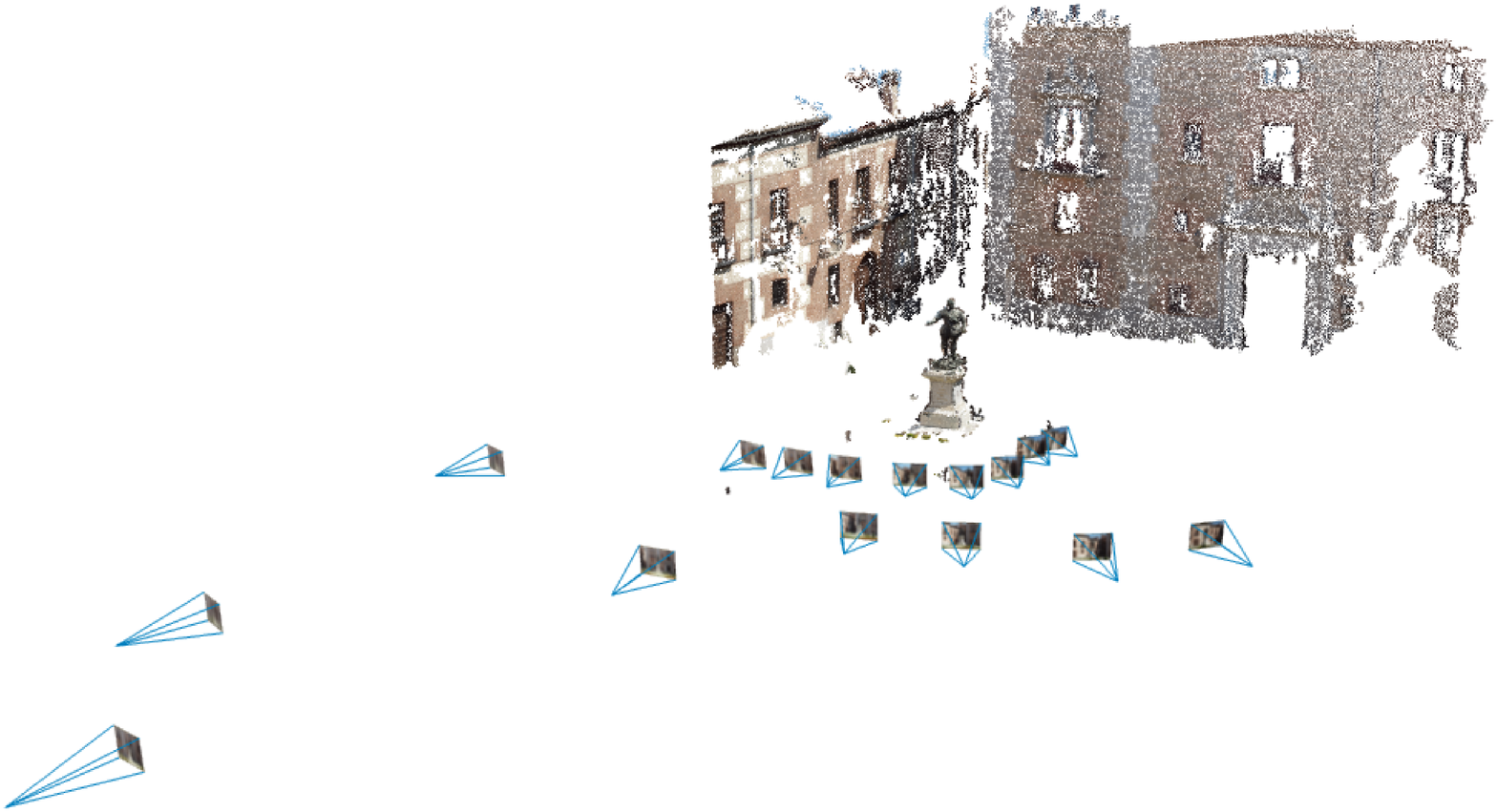}
\includegraphics[width=\columnwidth]{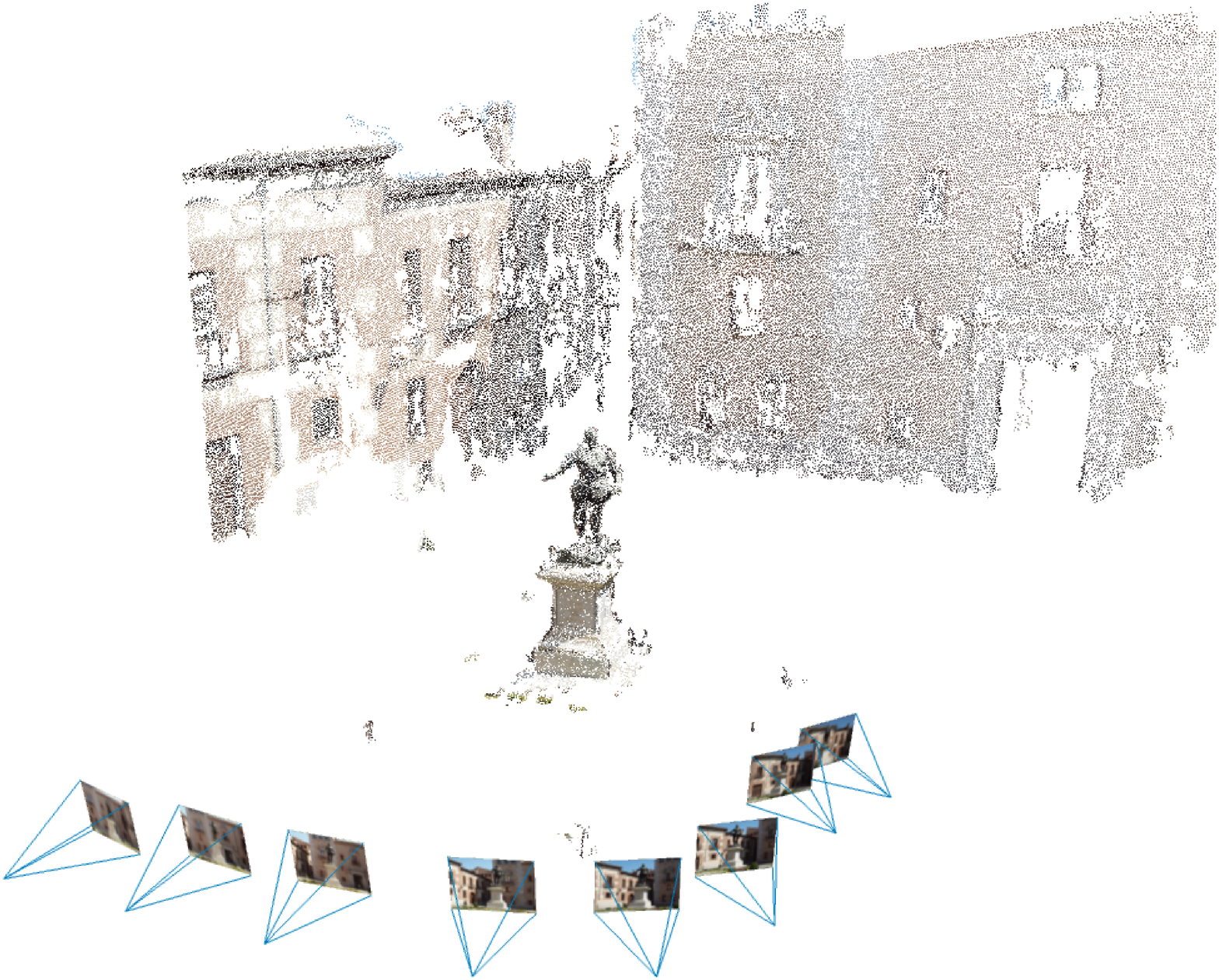}
\caption{Experiment with 16 images of the outdoor dataset: reconstructed
3D scene.}
\label{fig:PlazaDeLaVillaScene16imgs}
\end{centering}
\end{figure}

\begin{figure*}
\begin{centering}
\includegraphics[width=0.89\linewidth]{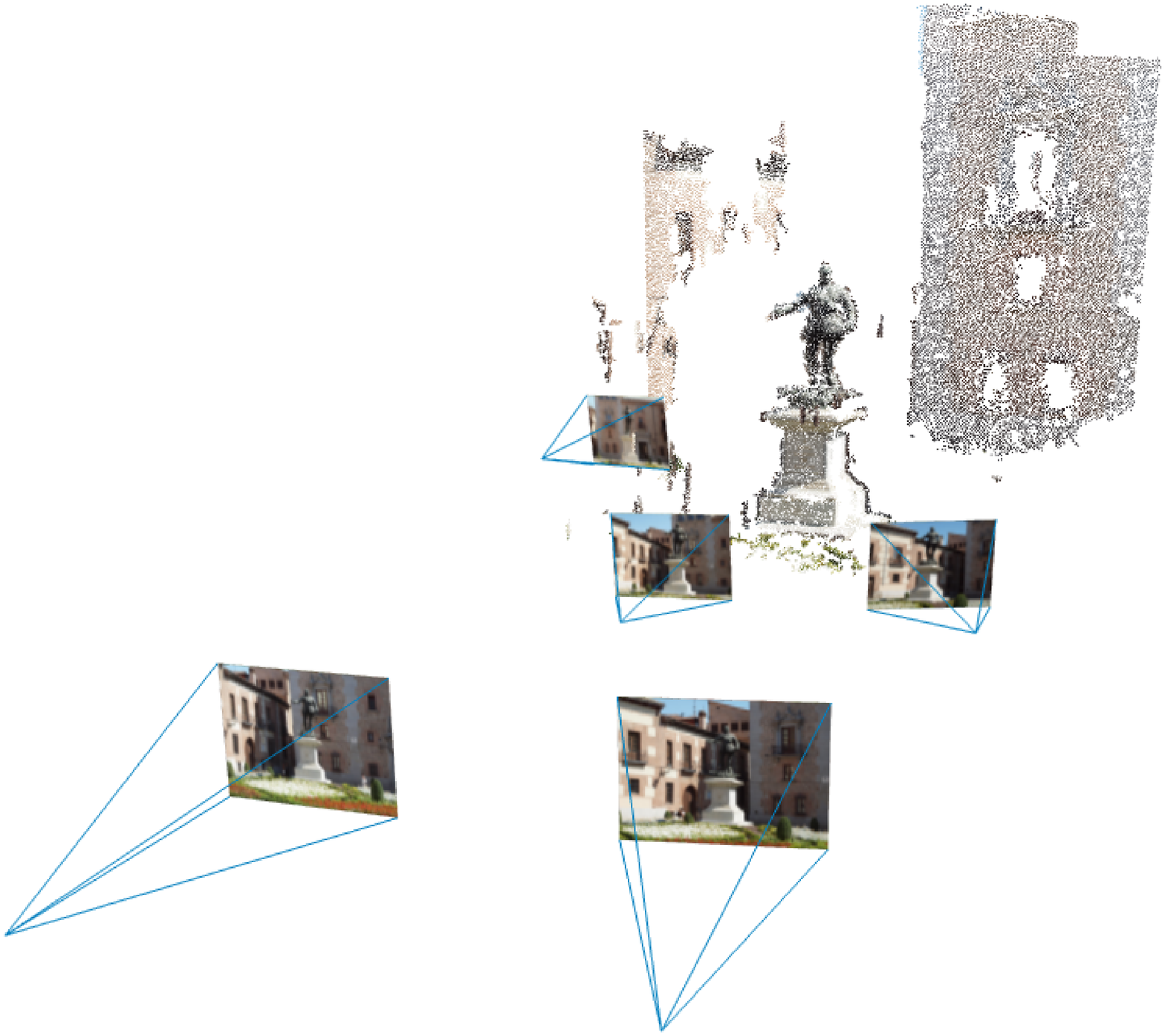}
\caption{Experiment with 5 images of the outdoor dataset: reconstructed
3D scene.}
\label{fig:PlazaDeLaVillaScene05imgs}
\end{centering}
\end{figure*}

\section{Discussion}
\label{sec:discussion}
The search of the plane at infinity in a projectively reconstructed
scene obtained with five or more square-pixel cameras is in principle a three-dimensional optimization problem. The SLCV concept allows to convert it into a two-dimensional optimization problem by restricting the search to the set of candidate planes that are compatible with the restrictions provided by a given subset of three of the cameras.

The empirical comparison of the SLCV algorithm
  performance with alternative 3D-search based algorithms presented in
  the results section points out two
  facts: First, the SLCV algorithm does provide valid results with
  different sets of real image data without previous preprocessing of
  the projective reconstruction, while for the tested 3D-search algorithms a
  previous quasi-affine upgrading seems to be mandatory.  Second, the
  availability of this quasi-affine upgrading can be effective to the
  point of compensating for the larger dimension of the search space.

Summing up these to facts, we may conclude 
  that the choice between the SLCV unrestricted 2D search and a
  cheirality constrained 3D search depends on the effectiveness of
  these constraints to narrow the search area, which in turn depends
  on the scene contents.

Although cheirality constraints have not yet been
  considered in the proposed algorithm, nothing prevents this
  integration and therefore it constitutes an interesting topic for
  future research. Two approaches are natural: If the convex hull of
  the camera centers and scene points intersects one of the camera
  principal planes, cheirality constraints emerge by restricting the
  search to the lines in this principal plane that do not intersect
  the convex hull. In any case, the search can be performed on the set
  of lines of any plane of space. For a general plane, each line will
  require the solution of a fifth-degree polynomial, while for a plane
  contained in the SLCV it suffices a fourth-degree polynomial that,
  as is well known, can be solved using a closed-form expression.
  Result~\ref{res:included_pencils} provides a wide source of such
  planes that can be used for this purpose.

\section{Conclusions}
\label{sec:conclusions}

In this paper we have proposed an algorithm to obtain a Euclidean
reconstruction from the minimum possible number of cameras with known
pixel shape but otherwise varying parameters, i.e., five cameras.
To this purpose we have introduced, as our main tool, the geometric
object given by the variety of conics intersecting six given spatial
lines simultaneously (the six-line conic variety, SLCV). We have presented
an independent and self-contained treatment including a procedure
for the explicit computation of the equation of the SLCV defined by
three cameras.

While the SLCV for six lines in generic position is, in general, a
surface of degree 8, we have shown that this degree can be reduced to
5 in the case of the three pairs of isotropic lines of three finite
square-pixel cameras. We have seen that a direct application of this
result is the obtainment of the candidate planes at infinity in case two
points at infinity are known.

We have seen that the fifth-degree SLCV has three singularities of
multiplicity three. We have used this fact to obtain a computationally
efficient parameterization of the SLCV that, if we have some additional
data, permits the obtainment of the plane at infinity by means of
a two-dimensional search. An algorithm has been proposed for the case
in which two or more additional cameras are available.

Experiments with real images for the autocalibration of scenes with 5,
10 and 16 cameras have been given, showing the good performance of the
SLCV technique.  Particularly, in the case of fewer than 10 cameras,
so that the AQC cannot be used, the SLCV method overcomes the
limitation of the DAQ based technique for camera autocalibration with
varying parameters, i.e., the need for known principal
point. We have also included a comparison with two
  other algorithms based on 3D-search on the space of planes.  Thus,
the developed method seems to be a feasible approach to solve the autocalibration
problem in the above situation without requiring previous
initialization.  Furthermore, we have shown that the SLCV method is
also a sound alternative to other approaches that require 10 or more
cameras or some knowledge of the principal point.

\section*{Acknowledgment}
The authors thank the anonymous reviewers for
their helpful comments, in particular for suggesting the comparisons of
the SLCV algorithm with the 3D search algorithms, and for pointing
out references~\cite{Finsterwalder} and~\cite{Sturm11}.

\appendix

\section{Proof of Result~\ref{th:determinant_characterizes}}
\normalsize
\label{sec:Proof_determinant_characterizes}
\begin{proof}
  First note that if $D(\bpi,\ba_1,\ldots,\ba_4)=0$, the points
  $\mL_{i}\bpi,\ba_j$ lie on a quadric $Q$. By construction, the
  points $\mL_{i}\bpi$ are also on the plane $\bpi$,  and therefore they
  lie on the conic $\bpi\cap Q$.

Conversely, let us suppose that the points $\mL_{i}\bpi$ lie on a
conic $C$ contained in $\bpi$. Let us choose coordinates $(x,y,z,t)$ such that the plane
$\bpi$ is given by the equation $t=0$. Let $C(x,y,z)=t=0$ be the
equations of $C$. Any quadric $Q$ containing $C$
has an equation of the form
\begin{displaymath}
  \alpha\,C(x,y,z) + t(ax+by+cz+dt)=0.
\end{displaymath}
Given four points $\ba_j=(x_j,y_j,z_j,t_j)^\top$, 
it is always possible to find some quadric $Q$ through them
because the points lead to a linear homogeneous system of
four equations in the five unknowns $a,b,c,d,\alpha$ which always
admits at least one non-trivial solution.
Since the $10$ points $\mL_{i}\bpi,\ba_j$ lie on $Q$, 
we conclude that $D(\bpi,\ba_1,\ldots,\ba_4)=0$.
\qed\end{proof}

\section{Proof of Result~\ref{th:determinant_factorizes}}
\label{sec:Proof_determinant_factorizes}
\begin{proof}
Any plane $\bpi$ through the point $\ba_{1}$ is a
  trivial solution of $D=0$, since the seven points
  $\mL_{1}\bpi,\ldots,\mL_{6}\bpi,\ba_{1}$ would lie on the plane
  $\bpi$ and therefore the ten points lie on the quadric given by
  $\bpi$ and the plane formed by points $\ba_{2},\ba_{3},\ba_{4}$.
  Consequently, the  planes through  $\ba_{1}$ produce
  a linear factor $\bpi^{\top}\ba_{1}$ of $D$ and so do the planes
  through each of the three remaining points $\ba_j$.

  Let us denote $\mA=(\ba_{1},\ldots,\ba_{4})$. Next we show that $\det\mA$ also divides $D$.
  Since the determinant $\det\mA$ is irreducible~\cite[p.~176]{Bocher} when
  regarded as a polynomial in the coordinates of the points
  $\ba_j$, it is enough to show that $D$ vanishes whenever
  $\det\mA=0$, i.e., when the points $\ba_{j}$ are
  contained in some plane $\bpi'$.  In such case, the ten
  points $\mL_{i}\bpi$, $i=1,\ldots,6$ and $\ba_{j}$, $j=1,\ldots,4$,
  lie in the degenerate quadric $\bpi\cdot\bpi'$ 
 so that $D$ cancels. We have
  therefore the required
  factorization~\eqref{eq:determinant_factorizes}.  

  Let us now show that $F$ does not depend on the variables
  $\ba_j$. The degree of the variables $\ba_{j}$ on the left hand side
  of~\eqref{eq:determinant_factorizes} is two because the vectors
  $\nu_2(\ba_j)$ are homogeneous quadratic in the entries of $\ba_j$
  and each of the $\nu_2(\ba_j)$ appears only once as a column of the
  determinant. On the right hand side
  of~\eqref{eq:determinant_factorizes} we have the factor $\det\mA$,
  which is homogeneous of degree one in each $\ba_j$, and the factors
  $\bpi^\top \ba_j$, which are also homogeneous of degree
  one. Therefore the remaining factor $F$ does not depend on the
  $\ba_{j}$ since otherwise there would be a mismatch between the
  degrees of both sides of~\eqref{eq:determinant_factorizes}. 

  Finally, let us check that the equation $F(\bpi)=0$ characterizes
  the planes intersecting the six lines in points of a conic. Now that
  we have proven factorization~\eqref{eq:determinant_factorizes}, it
  is trivial that if $F(\bpi)=0$ then $D(\bpi, \ba_1,\ldots,\ba_4)=0$
  and, by Result~\ref{th:determinant_characterizes}, $\bpi$ is a 
  plane intersecting the six lines in points of a conic. 
  Conversely, if $\bpi$ intersects the six lines in
  points of a conic, the determinant $D$ vanishes for any $\ba_j$. In
  particular, choosing any non-coplanar points $\ba_j$ not in $\bpi$, so
  that $\det\mA\not=0\not=\bpi^\top\ba_j$, we conclude that
  $F(\bpi)=0$.
\qed\end{proof}

\section{Proof of Result~\ref{res:sing_planes}}
\label{sec:Proof_sing_planes}
\begin{proof} Let $l_{i},\bar{l}_{i}$ note the pair of lines through the optical center $\bC_{i}$ and 
let $\mL_{i}$, $\bar{\mL}_{i}$ be their Pl\"ucker matrices. 
Also, let the corresponding principal planes be $\bpi_{i}$, $i=1,2,3$. 
Using Result~\ref{th:determinant_factorizes} we have that 
\[
\begin{split}D & =\det\left(\nu_{2}(\mL_{1}\bpi),\ldots,\nu_{2}(\bar{\mL}_{3}\bpi),\nu_{2}(\ba_{1}),\ldots,\nu_{2}(\ba_{4})\right)\\
 & =\det(\ba_{1},\ldots,\ba_{4})\,(\bpi^{\top}\ba_{1})\cdots(\bpi^{\top}\ba_{4})\,(\bpi^{\top}\bC_{1})(\bpi^{\top}\bC_{2})\\
 &\qquad (\bpi^{\top}\bC_{3})\, G(\bpi).
\end{split}
\]
Assuming that $\bpi_{1}$ is a generic principal plane, let the candidate planes be parameterized as $\bpi=\lambda\bpi_{1}+\mu\bxi$.
Since $l_{1},\bar{l}_{1}$ are contained in $\bpi_{1}$ we have $\mL_{1}\bpi_{1}=\bar{\mL}_{1}\bpi_{1}=\mathbf{0}$
and therefore $\mL_{1}\bpi=\mu\mL_{1}\bxi$ and $\bar{\mL}_{1}\bpi=\mu\bar{\mL}_{1}\bxi$.
Consequently 
\[
D(\lambda,\mu)=\mu^{4}\det\left(\nu_{2}(\mL_{1}\bxi),\nu_{2}(\bar{\mL}_{1}\bxi),\nu_{2}(\mL_{2}\bpi),\ldots,\nu_{2}(\ba_{4})\right).
\]
 On the other hand, since $\bC_{1}$ in $\bpi_{1}$, we have $\bC_{1}^{\top}\bpi=\mu\bC_{1}^{\top}\bxi$,
so 
\[
\begin{split} & \mu^{4}\det\left(\nu_{2}(\mL_{1}\bxi),\nu_{2}(\bar{\mL}_{1}\bxi),\nu_{2}(\mL_{2}\bpi),\ldots,\nu_{2}(\ba_{4})\right)\\
 & =\mu\det(\ba_{1},\ldots,\ba_{4})\,(\bpi^{\top}\ba_{1})\cdots(\bpi^{\top}\ba_{4})\,(\bxi^{\top}\bC_{1})(\bpi^{\top}\bC_{2})\\
 &\qquad (\bpi^{\top}\bC_{3})\, G(\bpi).
\end{split}
\]
 Using the genericity of $\bpi_{1}$ and choosing conveniently the
points $\ba_j$ we have that $\bC_{2},\bC_{3},\ba_{j}$ not in $\bpi_{1}$
for $j=1,\ldots,4$, i.e., $\mu$ does not divide any linear factor
besides $\bpi^{\top}\bC_{1}$. Therefore $\mu^{3}$ divides $G(\lambda\bpi_{1}+\mu\bxi)$
and so $\bpi_{1}$ is a singular point of $G$ of multiplicity three.

\qed\end{proof}

\section{Calculation of the Isotropic Lines}
\label{sec:isotropic_lines}
The isotropic lines are the back-projection of the cyclic points at
infinity $(1, \pm i, 0)^\top$. Let us see how we can compute their
Pl\"ucker matrices from the rows of the corresponding projection
matrix $\mP$. In a projective reference in which $\mP =
(\bp_1,\bp_2,\bp_3)^\top$, the back-projection of the cyclic points
is given by those $\bX$ in $\bP^3$ such that
\begin{displaymath}
  \mP\bX \sim (1,\pm i, 0)^\top.
\end{displaymath}
Therefore the cross-product
\begin{displaymath}
  \mP\bX\times  (1,\pm i, 0)^\top=0,
\end{displaymath}
or, equivalently, $\bX$ satisfies the equations
\begin{displaymath}
  \bp_3^\top\bX =0=(\bp_2\pm i\bp_1)^\top\bX.
\end{displaymath}
Hence, the isotropic lines are defined by the intersection of the
planes $\bp_3$ and $\bp_2\pm i\bp_1$. Finally, their Pl\"ucker
matrices are given by
\begin{displaymath}
  \mL = \mM(\bp_3,\bp_2+ i\bp_1)^\ast,\quad \bar\mL = \mM(\bp_3,\bp_2- i\bp_1)^\ast 
\end{displaymath}
where $\mM$ and the $\ast$ operator were defined in Section~\ref{subsec:plucker_coordinates}.

\section{Computation of Polynomial $H_{0}$ }
\label{sec:H0coeffs}

The polynomial $H_{0}(\lambda,\mu)=A_{0}\lambda^{2}+B_{0}\lambda\mu+C_{0}\mu^{2}$
in equation (\ref{eq:H_0}) can be computed as follows. Performing
the coordinate change (\ref{eq:mapsto}), the Pl\"ucker matrices of
the lines $l_{i}$ are 
\[
\begin{split}\mL_{1} & =\bC_{1}\bq^{\top}-\bq\bC_{1}^{\top}=\left(\begin{array}{cccc}
0 & 0 & 0 & -1\\
\noalign{\medskip}0 & 0 & 0 & -i\\
\noalign{\medskip}0 & 0 & 0 & 0\\
\noalign{\medskip}1 & i & 0 & 0
\end{array}\right),\\
\mL_{2} & =\bC_{2}\bp_{2}^{\top}-\bp_{2}\bC_{2}^{\top}=\left(\begin{array}{cccc}
0 & 0 & -x_{{2}} & -x_{{2}}\\
\noalign{\medskip}0 & 0 & -y_{{2}} & -y_{{2}}\\
\noalign{\medskip}x_{{2}} & y_{{2}} & 0 & -z_{{2}}\\
\noalign{\medskip}x_{{2}} & y_{{2}} & z_{{2}} & 0
\end{array}\right),\\
\mL_{3} & =\bC_{3}\bp_{3}^{\top}-\bp_{3}\bC_{3}^{\top}=\left(\begin{array}{cccc}
0 & -x_{{3}} & x_{{3}} & -x_{{3}}\\
\noalign{\medskip}x_{{3}} & 0 & z_{{3}}+y_{{3}} & -y_{{3}}\\
\noalign{\medskip}-x_{{3}} & -y_{{3}}-z_{{3}} & 0 & -z_{{3}}\\
\noalign{\medskip}x_{{3}} & y_{{3}} & z_{{3}} & 0
\end{array}\right),
\end{split}
\]
 where the points $\bp_{i}=(x_{i},y_{i},z_{i},0)^{\top}$ are the
intersection of lines $l_{i}$ with the plane $\bpi_{4}=(0,0,0,1)^{\top}$.
The Pl\"ucker matrices $\bar{\mL}_{i}$ are just the complex conjugate
of the matrices $\mL_{i}$. Substituting in equation (\ref{eq:determinant_definition})
and factoring out the trivial linear factors we obtain the polynomial
$H_{0}(\alpha,\beta)$. 
Explicit formulas can be found in 
\protect\href{http://www.gti.ssr.upm.es/~jir/SLCV.html}{http://www.gti.ssr.upm.es/~jir/SLCV.html}

\bibliographystyle{spmpsci}      
\bibliography{SLCV-JMIV-postprint} 
\end{document}